\documentclass{article}
\usepackage{microtype}
\usepackage{graphicx}
\usepackage{subcaption}
\usepackage{booktabs} 
\usepackage{hyperref,enumitem}

\newcommand{\Aut}{\mathrm{Aut}}
\newcommand{\gl}{\mathrm{GL}}
\newcommand{\IR}{\mathbb{R}}

\newcommand{\IC}{\mathbb{C}}
\newcommand{\spn}{\mathrm{span}}
\newcommand{\tr}{\mathrm{tr}}
\newcommand{\im}{\mathrm{im}}

\makeatletter
\newcommand*{\inlineequation}[2][]{%
  \begingroup
    \refstepcounter{equation}%
    \ifx\\#1\\%
    \else
      \label{#1}%
    \fi
    \relpenalty=10000 %
    \binoppenalty=10000 %
    \ensuremath{%
      #2%
    }%
    ~\@eqnnum
  \endgroup
}
\makeatother

\usepackage[preprint]{icml2026}

\usepackage{amsmath}
\usepackage{amssymb}
\usepackage{mathtools}
\usepackage{amsthm}
\usepackage{nicematrix}
\usepackage{tikz}
\usetikzlibrary{arrows,decorations.pathmorphing,backgrounds,positioning,fit,petri}

\usepackage[capitalize,compress]{cleveref}

\theoremstyle{plain}
\newtheorem{theorem}{Theorem}[section]

\newtheorem{lemma}[theorem]{Lemma}
\newtheorem{corollary}[theorem]{Corollary}
\newtheorem{example}[theorem]{Example}
\theoremstyle{definition}
\newtheorem{definition}[theorem]{Definition}

\theoremstyle{remark}

\usepackage[textsize=tiny]{todonotes}

\icmltitlerunning{Representation-Theoretic GNN Readouts}

\begin{document}

\twocolumn[
  \icmltitle{Breaking Symmetry Bottlenecks in GNN Readouts}

  \icmlsetsymbol{equal}{*}

  \begin{icmlauthorlist}
    \icmlauthor{Mouad Talhi}{xxx}
    \icmlauthor{Arne Wolf}{yyy,zzz}
    \icmlauthor{Anthea Monod}{yyy}
  \end{icmlauthorlist}

  \icmlaffiliation{xxx}{Department of Computing, Imperial College London, UK}
  \icmlaffiliation{yyy}{Department of Mathematics, Imperial College London, UK}
  \icmlaffiliation{zzz}{London School of Geometry and Number Theory, UK}

  \icmlcorrespondingauthor{Anthea Monod}{a.monod@imperial.ac.uk}

  \icmlkeywords{Machine Learning, ICML}

  \vskip 0.3in
]

\printAffiliationsAndNotice{} 

\begin{abstract}

Graph neural networks (GNNs) are widely used for learning on structured data, yet their ability to distinguish non-isomorphic graphs is fundamentally limited.  These limitations are usually attributed to message passing; in this work we show that an independent bottleneck arises at the readout stage.  Using finite-dimensional representation theory, we prove that all linear permutation-invariant readouts, including sum and mean pooling, factor through the Reynolds (group-averaging) operator and therefore project node embeddings onto the fixed subspace of the permutation action, erasing all non-trivial symmetry-aware components regardless of encoder expressivity.  This yields both a new expressivity barrier and an interpretable characterization of what global pooling preserves or destroys.  To overcome this collapse, we introduce projector-based invariant readouts that decompose node representations into symmetry-aware channels and summarize them with nonlinear invariant statistics, preserving permutation invariance while retaining information provably invisible to averaging.  Empirically, swapping only the readout enables fixed encoders to separate WL-hard graph pairs and improves performance across multiple benchmarks, demonstrating that readout design is a decisive and under-appreciated factor in GNN expressivity.

\end{abstract}

\section{Introduction}
Graph neural networks (GNNs) are a standard tool for learning on structured data, with applications ranging from chemistry and biology to social networks.  A large body of work studies their expressive power through the Weisfeiler--Leman (WL) hierarchy and message-passing limitations \cite{xu2019how,morris2019weisfeiler,grohe2021logic}, showing that many architectures fail to distinguish certain non-isomorphic graphs even with highly expressive node updates.

This literature focuses primarily on the encoder---the message-passing layers that produce node embeddings---while the readout, which compresses these embeddings into a graph-level representation, is typically treated as a minor design choice. This is problematic: the readout is the only stage where node-level information is aggregated, and a poor choice can irreversibly discard structure that was already computed.  In practice, most GNNs rely on simple permutation-invariant pooling such as sum or mean yet the representational effect of these operations remains poorly understood.

In this paper, we show that standard GNN readouts impose a fundamental and previously overlooked expressivity bottleneck that is independent of message passing.  We use representation theory to make this bottleneck explicit.

\textbf{Readout as a Representation-Theoretic Projection.}  We analyze readouts through the natural action of the node permutation group on node embeddings.  Under this action, sum and mean pooling factor through the classical \emph{Reynolds operator} from invariant theory, which projects any representation onto its fixed (permutation-invariant) subspace.  Using finite-dimensional representation theory, we show that this projection eliminates all non-trivial symmetry-aware components of the embeddings.  Consequently, no linear permutation-invariant readout can access information outside the trivial component, regardless of how expressive the encoder is.  We formalize this as a factorization theorem: every linear invariant readout necessarily factors through the Reynolds projection.  This yields a second expressivity bottleneck in GNNs, orthogonal to WL-type limitations, that arises purely at the readout level.

Beyond identifying this bottleneck, our analysis yields an explicit form of model interpretability: it characterizes exactly which symmetry components of the learned node representations are preserved or destroyed by common pooling operations. 

\textbf{Positioning and Relation to Prior Work.}
Our work connects three lines of research: permutation-invariant graph readouts \citep[e.g.,][]{zaheer2017deep,gilmer2017neural,corso2020pna}; WL-based expressivity theory for message-passing GNNs \citep[e.g.,][]{xu2019how,morris2019weisfeiler}; and representation-theoretic approaches to equivariance \citep[e.g.,][]{maron2019invariant,kondor2018generalization}.  While prior work has largely used representation theory as a \emph{design principle} to construct equivariant architectures or higher-order networks, here, we use it as an \emph{analytic tool} to characterize what standard pooling readouts preserve or destroy in order to understand precisely what existing models compute and why they fail.  This leads to an interpretable view of global pooling as a symmetry-induced projection and isolates a readout-level bottleneck that is complementary to WL-type encoder limitations.  A more detailed discussion of related work appears in Appendix~\ref{app:related_work}.

\textbf{Escaping the Collapse.}
We propose a family of projector-based invariant readouts that preserve permutation invariance while retaining symmetry-aware information.  The idea is to apply graph-dependent equivariant projectors that decompose node embeddings into invariant channels, followed by simple nonlinear invariant summaries of each channel.  This allows information that is provably invisible to global pooling to survive into the graph representation.

Empirically, replacing only the readout while keeping the encoder fixed enables GNNs to separate WL-hard graph pairs that standard pooling collapses, and consistently improves or matches performance on real-world benchmarks.

\textbf{Contributions.} Our specific contributions are:
\begin{itemize}[noitemsep]
\item We identify all linear permutation-invariant GNN readouts as instances of the Reynolds projection and show that they eliminate all non-trivial symmetry components.
\item We prove that any linear permutation-invariant readout necessarily factors through this projection, inducing an encoder-independent expressivity bottleneck.
\item We introduce projector-based invariant readouts that retain symmetry while recovering information provably invisible to global pooling.
\item We demonstrate improved graph discrimination and prediction performance without increasing message-passing complexity.
\end{itemize}

Together, these results highlight the readout as a crucial and under-studied component of GNN architectures and provide a principled path toward more expressive graph-level representations.

\section{Background: Representation Theory for Graph Symmetries} \label{sec:background}

Representation theory studies groups through their actions on vector spaces. In this paper, the groups of interest are graph automorphism groups, which act by permuting node indices.  We briefly recall the essential concepts needed for our construction and refer to \citet{serre_RT} for details.  All proofs and derivations in this work appear in Appendix \ref{app:proofs}.

We restrict our attention to finite groups and finite-dimensional complex vector spaces, since representations over $\mathbb{C}$ admit a  clean decomposition theory.  Although our analysis is carried out over $\mathbb{C}$, all constructions used in practice reduce to real-valued operations; see Appendix \ref{app:realvscomplex}.

Let $H$ be a finite group and $U$ be a complex (or real) vector space.  A \emph{(linear) representation} of $H$ on $U$ is a homomorphism $\rho: H \to \gl(U)$, where $\gl(U)$ denotes the group of invertible linear maps on $U$ (the group of isomorphisms of $U$); i.e., $\rho(h_1h_2)=\rho(h_1)\cdot \rho(h_2)$ for all $h_1,h_2 \in H.$  The dimension of $U$ is called the \emph{degree} of the representation.

Two representations $\rho_1: H \to \gl(U_1)$ and $\rho_2: H \to \gl(U_2)$ are \emph{isomorphic} if there exists a linear isomorphism $\tau: U_1 \to U_2$ such that $\tau \circ \rho_1(h) = \rho_2(h) \circ \tau$ for all $h \in H.$

\textbf{Decomposing Representations.}  A subspace $W\subseteq U$ is \emph{$H$-invariant} if $\rho(h)w\in W$ for all $h\in H$ and $w\in W$.  Restricting $\rho$ to such a subspace yields a \emph{subrepresentation}.  A representation is said to be \emph{irreducible} if it has no non-trivial invariant subspaces.

For finite groups, every representation decomposes into irreducible components:

\begin{theorem}[Maschke Decomposition, \citet{serre_RT}, Theorem 2] \label{thm:maschke}
Every representation of a finite group is completely reducible, that is, it can be written as a direct sum of irreducible subrepresentations.
\end{theorem}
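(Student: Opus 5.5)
We prove Maschke's theorem by induction on the degree $\dim U$, using the averaging trick that later underlies the Reynolds operator. Let $\rho: H \to \gl(U)$ be a representation of the finite group $H$ on a finite-dimensional complex vector space $U$. If $U = 0$ or $U$ is irreducible, there is nothing to prove. Otherwise $U$ has an $H$-invariant subspace $W$ with $0 \neq W \neq U$. The whole argument reduces to one claim: \emph{every $H$-invariant subspace $W$ has an $H$-invariant complement $W'$}, so that $U = W \oplus W'$. Given the claim, both $W$ and $W'$ have strictly smaller dimension and are subrepresentations. The induction hypothesis decomposes each into irreducibles, and concatenating the two decompositions gives one for $U$.

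\textbf{Constructing the complement.} First choose any linear projection $p_0: U \to W$ with $p_0|_W = \mathrm{id}_W$; such a map exists by extending a basis of $W$ to a basis of $U$. This $p_0$ need not commute with $\rho$, so we average it over the group:
\begin{equation*}
p \;=\; \frac{1}{|H|}\sum_{h\in H} \rho(h)\, p_0\, \rho(h)^{-1}.
\end{equation*}
The division by $|H|$ is legitimate because $|H|$ is invertible in $\IC$; this is exactly where finiteness of $H$ and characteristic zero enter.

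\textbf{Checking the averaged map.} Three routine checks remain.
\begin{enumerate}[noitemsep]
\item \emph{Image lies in $W$.} Each summand maps $U$ into $W$, since $p_0$ lands in $W$ and $W$ is invariant under every $\rho(h)$.
\item \emph{$p$ fixes $W$.} For $w \in W$ we have $\rho(h)^{-1}w \in W$, so $p_0$ fixes it, and each summand returns $w$. Averaging $|H|$ copies of $w$ gives $p(w) = w$.
\item \emph{$p$ is equivariant.} For $g \in H$, reindexing the sum via $h \mapsto gh$ gives $\rho(g)\,p\,\rho(g)^{-1} = p$.
\end{enumerate}

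From the first two checks, $p$ is a projection onto $W$, so $U = W \oplus \ker p$. From equivariance, $W' := \ker p$ is $H$-invariant: if $p(u) = 0$, then $p(\rho(g)u) = \rho(g)p(u) = 0$.

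\textbf{Where the difficulty lies.} The main conceptual obstacle is this complement step. A naive orthogonal complement need not be invariant. An alternative is to average an inner product into an $H$-invariant one, $\langle u,v\rangle_H = \sum_h \langle \rho(h)u,\rho(h)v\rangle$, and take the orthogonal complement of $W$ with respect to it. I would mention this as the real-case variant, since it also works over $\IR$. Either way, the averaging step is the one place where the hypotheses are genuinely used; the induction around it is bookkeeping.
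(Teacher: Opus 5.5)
Your proof is correct and is essentially the argument the paper relies on: the paper does not reprove the theorem but defers to \cite{serre_RT}, where an $H$-invariant complement is obtained by averaging an arbitrary projection onto $W$ over the group (Serre's Theorem~1), and complete reducibility then follows by induction on $\dim U$ (Theorem~2), exactly as you do. Your alternative via an averaged $H$-invariant inner product is also the variant Serre gives, and the paper's remark that Serre's proof carries over to $\IR$ is consistent with both routes.
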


While such decompositions are generally not unique, there exists a canonical and unique coarser decomposition that collects equivalent irreducible components, which is the \emph{isotypic (canonical) decomposition} and is central to our work.

A key tool is the \emph{character} of a representation, defined by $\chi_\rho(h):=\mathrm{tr}(\rho(h))$.  Characters determine representations up to isomorphism (\cref{thm:characters}) and allow explicit construction of invariant projectors.

Let $\{\rho_i:H\to\gl(W_i)\}_{i=1}^m$ be a complete set of pairwise non-isomorphic  irreducible representations of $H$ (of which there are finitely many \citep[Theorem 7]{serre_RT}), with characters $\chi_i$, ordered such that $\rho_1$ is the trivial representation (i.e., the one sending $h \mapsto 1 \in \gl(\IC) \cong \IC \setminus \{0\}$ for all $h\in H$). Then we have the following result.

\begin{theorem}[Canonical Decomposition, \citet{serre_RT}]
\label{thm:isotypic}

Any representation $U$ decomposes uniquely as $U = U_1 \oplus \ldots \oplus U_m$, where $U_i$ is the sum of irreducible subrepresentations isomorphic to $\rho_i$.  The corresponding projector is
    \begin{align} \label{eq:character_proj}
        p_i= \frac{\dim(W_i)}{|H|} \sum_{h \in H} \overline{\chi_i(h)} \ \rho(h),
    \end{align}
    where the overline indicates complex conjugation.
\end{theorem}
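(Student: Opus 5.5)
We prove the projector formula \eqref{eq:character_proj} first, and derive the uniqueness of the decomposition from it. The tools are Schur's lemma and the orthogonality relations for irreducible characters, $\frac{1}{|H|}\sum_{h}\overline{\chi_i(h)}\chi_j(h)=\delta_{ij}$ (\citet{serre_RT}, Theorem 3). By \cref{thm:maschke}, fix some decomposition $U=V_1\oplus\cdots\oplus V_r$ into irreducibles, and let $U_i$ be the sum of those $V_k$ isomorphic to $\rho_i$. Then $U=U_1\oplus\cdots\oplus U_m$ holds immediately.

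\textbf{Step 1: $p_i$ is equivariant.} Reindexing the sum and using that $\chi_i$ is a class function gives $p_i\rho(g)=\rho(g)p_i$ for all $g$. Hence $p_i$ preserves every invariant subspace, in particular each $V_k$.

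\textbf{Step 2: $p_i$ acts by a scalar on each irreducible.} Restricting to an irreducible $V\cong W_j$, Schur's lemma says $p_i|_V=\lambda\,\mathrm{id}_V$. Taking traces and using $\dim V=\dim W_j$,
\[
\lambda=\frac{1}{\dim W_j}\cdot\frac{\dim W_i}{|H|}\sum_h\overline{\chi_i(h)}\chi_j(h)=\frac{\dim W_i}{\dim W_j}\,\delta_{ij}.
\]
So $p_i$ is the identity on every irreducible isomorphic to $\rho_i$ and zero on all others. It follows that $p_i$ is the projection onto $U_i$ along $\bigoplus_{j\ne i}U_j$.

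\textbf{Step 3: uniqueness.} The formula for $p_i$ involves only $\rho$ and $\chi_i$, not the chosen decomposition $V_1\oplus\cdots\oplus V_r$. Therefore $U_i=\im(p_i)$ is independent of that choice. The same argument shows that $U_i$ equals the sum of all irreducible subrepresentations isomorphic to $\rho_i$. Indeed, any such $W\subseteq U$ satisfies $p_iW=W$ by Step 2, so $W\subseteq\im p_i=U_i$. Conversely, $U_i$ is itself a sum of such subspaces.

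\textbf{Main obstacle.} The only nontrivial inputs are Schur's lemma and character orthogonality. We cite both from \citet{serre_RT} rather than reprove them. The one point needing care is the trace computation in Step 2. There we need $\tr(\rho(h)|_V)=\chi_j(h)$, which holds because characters are invariant under isomorphism. We also need to match the conjugation convention in the orthogonality relation, which is where the $\overline{\chi_i}$ in \eqref{eq:character_proj} comes in.
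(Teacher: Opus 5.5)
Your argument is correct and is essentially the proof in Serre (p.~21) that the paper cites rather than reproducing: use Schur's lemma, a trace computation and character orthogonality to show that $p_i$ acts on every irreducible $V\cong W_j$ as the scalar $\delta_{ij}$, conclude that $p_i$ is the projection onto $U_i$ along $\bigoplus_{j\neq i}U_j$, and read off uniqueness from the fact that the formula involves no choice of decomposition. One inference in Step 1 needs repair: commuting with $\rho$ does not by itself make $p_i$ preserve every invariant subspace (for trivial $H$ every linear map commutes with $\rho$). The claim $p_i(V)\subseteq V$ is nonetheless true because $p_i$ is a linear combination of the $\rho(h)$, and together with equivariance this makes $p_i|_V$ an $H$-endomorphism of $V$, to which Schur's lemma applies.
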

This decomposition is independent of all choices and isolates symmetry components that transform independently under $H$.

\section{The Readout Bottleneck in Graph Neural Networks}
\label{sec:bottleneck}
In this section, we show that common linear permutation-invariant readouts such as sum or mean pooling induce a fundamental bottleneck in expressivity.  While message passing layers can encode rich, symmetry-aware information in non-trivial representation components \cite{gilmer2017neural}, linear invariant readouts necessarily collapse these components, regardless of the expressivity of the encoder.

\subsection{Group Averaging Suppresses Non-Trivial Representations}

We first express the procedure of taking the mean in a representation theoretic framework. To this end, let us fix a finite group $H$, finite-dimensional complex vector space $U$ and representation $\rho:H \to \gl(U)$.
\begin{definition}[Reynolds Operator, \citet{derksen_kemper}, p.~38]
The \emph{averaging} or \emph{Reynolds operator} associated with $\rho$ is the map $p_{\text{avg}}:=p_1:U\to U_1$ (\cref{thm:isotypic}).
\end{definition}
For the trivial representation, we have $\dim(W_1)=1$ and $\chi_1(h)=1 \ \forall\ h \in H$. Hence, for $i=1$, \cref{eq:character_proj} simplifies to 
\begin{align}
    p_{\text{avg}}=\frac{1}{|H|} \sum_{h \in H} \rho(h). \label{eq:avgop}
\end{align}
In \cref{subsec:factorization} we will show that this operator is indeed part of sum and mean pooling. 

The importance of this statement lies in the fact that the averaging operator acts as a projection onto the trivial component $U_1$. In the context of GNNs, this means that any information encoded in non-trivial representations is irreversibly lost at the readout stage.  We shall see that this leads to restricted expressivity of GNNs.

\subsection{A Factorization Result for Linear Invariant Readouts} \label{subsec:factorization}

\emph{Permutation invariance} is an important and usually desirable property in machine learning. Intuitively, it specifies that the outcome of an invariant map should not depend on symmetric changes to the domain.  Specifically in our representation theory context, we are interested in the following setting. 
\begin{definition}
    Consider a representation $\rho: H \to \gl(U)$ and a linear map $f:U \to U'$ for some vector space $U'$. Then $f$ is called $H$\emph{-invariant} if $$f(u)=f(\rho(h) \, u) \quad \forall\ h \in H,\ u \in U.$$
\end{definition}

\begin{theorem} \label{thm:factorization}
    Linear $H$-invariant maps factor through the averaging operator, i.e., for any such $f:U \to U'$, we can find a linear $f':U_1 \to U'$ with $f=f' \circ p_{\text{avg}}.$
\end{theorem}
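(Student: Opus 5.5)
The plan is to take $f' := f|_{U_1}$, the restriction of $f$ to the trivial isotypic component $U_1 = \im(p_{\text{avg}})$, and check that $f = f' \circ p_{\text{avg}}$ on all of $U$. Since $f'$ is the restriction of a linear map to a subspace, it is automatically linear, so everything reduces to the single identity $f = f \circ p_{\text{avg}}$.

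\textbf{Step 1: $f$ is unchanged by averaging.} For any $u \in U$, linearity of $f$ together with the explicit form \cref{eq:avgop} gives
\begin{align*}
f(p_{\text{avg}}u) = \frac{1}{|H|}\sum_{h\in H} f(\rho(h)u) = \frac{1}{|H|}\sum_{h\in H} f(u) = f(u).
\end{align*}
The middle equality is exactly the $H$-invariance hypothesis. This step uses only linearity and invariance, so it holds equally over $\IR$.

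\textbf{Step 2: $p_{\text{avg}}$ maps onto $U_1$.} By \cref{thm:isotypic}, $p_{\text{avg}} = p_1$ is the projector onto $U_1$, so $p_{\text{avg}}u \in U_1$ and $f'(p_{\text{avg}}u) = f(p_{\text{avg}}u)$ is well defined. Combining with Step 1 gives $f'(p_{\text{avg}}u) = f(u)$ for every $u$, which is the claimed factorization.

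\textbf{Possible gap.} The only point that needs any care is that the image of \cref{eq:avgop} really is $U_1$ as defined in \cref{thm:isotypic}, i.e.\ the sum of the trivial subrepresentations. If one does not want to quote that theorem, this can be checked directly. The image consists of $H$-fixed vectors, since $\rho(g)p_{\text{avg}} = p_{\text{avg}}$ by reindexing the sum. Conversely, any fixed vector $u$ satisfies $p_{\text{avg}}u = u$. The fixed subspace is a direct sum of trivial one-dimensional representations, and every trivial irreducible subrepresentation is spanned by a fixed vector, so the fixed subspace is exactly $U_1$. The factorization itself only needs $\im(p_{\text{avg}}) \subseteq U_1$, so even this point is mild.
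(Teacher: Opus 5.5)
Your proposal is correct and follows essentially the same argument as the paper: define $f'$ as the restriction of $f$ to $U_1=\im(p_{\text{avg}})$, then use linearity of $f$ and $H$-invariance to get $f(p_{\text{avg}}u)=\frac{1}{|H|}\sum_{h\in H} f(\rho(h)u)=f(u)$. Your extra check that $\im(p_{\text{avg}})$ equals the fixed subspace, and hence $U_1$, is sound; the paper records it separately as part (a) of \cref{facts:candecomp}.
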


Therefore, any linear $H$-invariant readout depends only on the projection of the node embeddings onto the trivial subspace $U_1$ and discards all components orthogonal to this fixed subspace. In many common settings, this fixed subspace is one-dimensional and the projection coincides with taking the mean of the node features.  This result applies to any linear readout acting on equivariant node features, including those obtained from deep message passing architectures.

\subsection{Linear Invariant Readouts Induce an Expressivity Bottleneck}

We now study the implications of the above representation-theoretic results on GNNs. Message passing layers in GNNs are permutation-equivariant by design. As a result, intermediate node features transform according to representations of the permutation group.
The readout, however, is typically permutation-invariant, making it the only stage at which equivariant information is collapsed into a graph-level representation.

Consider a graph $G = (V,E)$ with $|V|$ nodes and assign a $d$-dimensional feature vector to each vertex.  After the final message passing layer, these yield a feature matrix $M \in \IR^{|V| \times d}$. Permutation-equivariant of message passing means that applying a permutation $\sigma$ to the nodes of $G$ will apply $\sigma$ also to the columns of the feature matrix $M \in \IR^{|V| \times d}$.

Let $H:=S_{|V|}$ (or a subgroup thereof) and $U:=\IR^{|V| \times d}$ (see Appendix~\ref{app:theory} for a discussion on real representations). Then $H$ acts on $V$ by permuting nodes, yielding a representation $\rho: H \to \gl(U)$ by mapping $h \in H$ to the map that permutes rows of matrices $M \in U$ according to $h$. Concretely, writing the rows of $M$ as $m_1, \dots, m_{|V|}$ we have
$$
\rho(h): M = \begin{pNiceMatrix}
    m_1 \\ \vdots \\ m_{|V|}
\end{pNiceMatrix} \mapsto \begin{pNiceMatrix}
    m_{h(1)} \\ \vdots \\ m_{h(|V|)}
\end{pNiceMatrix}.
$$

Let $f:U \to U'$ be a linear, $H$-invariant readout for some vector space $U'$ and let $U_1 \subset U$ be the trivial component of $U$ identified in \cref{thm:isotypic}. Then \cref{thm:factorization} implies:

\begin{corollary} \label{cor:bottleneck}
Let $f : U \to U'$ be a linear, $H$-invariant map, and let $U_1 \subset U$ denote the trivial isotypic component. Then $f$ factors through the projection $p_\text{avg}$ onto $U_1$.
\end{corollary}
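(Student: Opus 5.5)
The corollary is a direct specialization of \cref{thm:factorization}, so the plan is to check that the GNN readout setting meets its hypotheses and then transfer the conclusion. There is one wrinkle: \cref{thm:factorization} and the canonical decomposition \cref{thm:isotypic} are stated for complex representations, whereas here $U=\IR^{|V|\times d}$ is real. First I will verify that $\rho$ is a genuine representation. Each $\rho(h)$ permutes the rows of $M$, so it is linear and invertible. Whether $\rho(h_1h_2)=\rho(h_1)\rho(h_2)$ holds exactly, or only after replacing $h$ by $h^{-1}$, depends on the indexing convention. I will fix the convention (using $m_{h^{-1}(i)}$ if necessary) so that $\rho$ is a homomorphism. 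This does not affect the invariance hypothesis, since invariance under all $\rho(h)$ is the same as invariance under all $\rho(h^{-1})$.

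Next I will prove the factorization directly, in a way that works over either field. For a linear $H$-invariant $f$, linearity and invariance give
\begin{align*}
f(p_{\text{avg}}u)=\frac{1}{|H|}\sum_{h\in H} f(\rho(h)u)=\frac{1}{|H|}\sum_{h\in H} f(u)=f(u).
\end{align*}
Hence $f=f|_{U_1}\circ p_{\text{avg}}$. Setting $f':=f|_{U_1}$ gives the required linear map $U_1\to U'$, where $U_1=\im(p_{\text{avg}})$. This computation uses only the averaging formula \cref{eq:avgop}, so it is valid verbatim over $\IR$, because $1/|H|$ is real. This is exactly the argument behind \cref{thm:factorization}.

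The main obstacle is identifying $U_1$ in the real setting. \cref{thm:isotypic} describes $U_1$ as the trivial isotypic component of a complex representation. I would complexify to $U_\IC=U\otimes\IC$ and note that $p_{\text{avg}}$ has real matrix entries, so it preserves the real subspace $U$. I would then show that its image on $U$ is the real fixed-point space $U^H=\{u:\rho(h)u=u\ \forall h\}$, which equals $U_1$ intersected with $U$. For this I need $p_{\text{avg}}$ to be idempotent with image $U^H$. If $\rho(h)u=u$ for all $h$, then $p_{\text{avg}}u=u$. Conversely, $\rho(g)p_{\text{avg}}=p_{\text{avg}}$ by reindexing the sum over $H$. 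These facts identify the target of $p_{\text{avg}}$ as the trivial component, and the corollary follows; see Appendix~\ref{app:realvscomplex}.
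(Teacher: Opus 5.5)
Your proof is correct and is essentially the paper's argument: the corollary follows from \cref{thm:factorization}, and its proof is exactly your computation $f(p_{\text{avg}}u)=\frac{1}{|H|}\sum_{h}f(\rho(h)u)=f(u)$ with $f':=f|_{U_1}$, while your identification of $\im(p_{\text{avg}})$ with the fixed-point space and your treatment of the real case correspond to \cref{facts:candecomp}(a) and Appendix~\ref{app:realvscomplex}. Your point about the convention is also right, since sending row $i$ to $m_{h(i)}$ gives $\rho(h_1)\rho(h_2)=\rho(h_2h_1)$, and, as you say, this changes neither the invariance hypothesis nor $p_{\text{avg}}$.
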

In particular, after applying the readout, we cannot distinguish feature vectors that differed only in their $U_2 \oplus \dots \oplus U_m$ component.  This represents a significant bottleneck in the expressivity of GNNs: In practice, the group $H$ is usually the full symmetry group $S_{|V|}$, since global mean or sum pooling is invariant under all node permutations. Thus, by Corollary \ref{cor:bottleneck} the linear pooling suppresses all non-trivial isotypic components of the equivariant feature representation, i.e., every node feature that actually changes under node permutations. While message passing can encode structural information into non-trivial representation components, linear fully invariant readouts necessarily discard this information.  

\begin{figure}[h]\centering
\begin{subfigure}[b]{0.2\textwidth}
\centering
    \begin{tikzpicture}[x=0.8cm,y=0.3cm]
   \clip (-2,-2.6) rectangle (2,2.4);
    \node[inner sep=1.5pt,circle,draw,fill,label=below:$a$] at (0.4,-0.9) {};
    \node[inner sep=1.5pt,circle,draw,fill,label=above:$d$] at (-0.4,0.9) {};
    \node[inner sep=1.5pt,circle,draw,fill,label=below:$f$] at (-0.4,-0.9) {};
    \node[inner sep=1.5pt,circle,draw,fill,label=above:$c$] at (0.4,0.9) {};
    \node[inner sep=1.5pt,circle,draw,fill,label=right:$b$] at (1.4,0) {};
    \node[inner sep=1.5pt,circle,draw,fill,label=left:$e$] at (-1.4,0) {};
    \draw (1.4,0) -- (0.4,-0.9) -- (-0.4,-0.9) --  (-1.4,0) -- (-0.4,0.9) -- (0.4,0.9) -- (1.4,0);
\end{tikzpicture}
\caption{Graph $G^{(1)}$}
\end{subfigure} \hspace{.5cm} \begin{subfigure}[b]{0.2\textwidth}
\centering
    \begin{tikzpicture}
    [x=0.8cm,y=0.3cm]
    \clip (-2,-2.6) rectangle (2,2.4);
    \node[inner sep=1.5pt,circle,draw,fill,label=below:$a$] at (0.4,-0.9) {};
    \node[inner sep=1.5pt,circle,draw,fill,label=above:$d$] at (-0.4,0.9) {};
    \node[inner sep=1.5pt,circle,draw,fill,label=below:$f$] at (-0.4,-0.9) {};
    \node[inner sep=1.5pt,circle,draw,fill,label=above:$c$] at (0.4,0.9) {};
    \node[inner sep=1.5pt,circle,draw,fill,label=right:$b$] at (1.4,0) {};
    \node[inner sep=1.5pt,circle,draw,fill,label=left:$e$] at (-1.4,0) {};
    \draw (1.4,0) -- (0.4,-0.9) -- (0.4,0.9) -- (1.4,0);
    \draw (-1.4,0) -- (-0.4,-0.9) -- (-0.4,0.9) -- (-1.4,0);
\end{tikzpicture}
\caption{Graph $G^{(2)}$}
\end{subfigure}
\caption{The graphs in Example \ref{ex:1}}
\label{fig:ex1}
\end{figure}
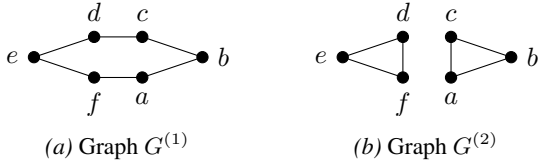

\begin{example} \label{ex:1}
    Consider the two graphs in \cref{fig:ex1}. These graphs are indistinguishable by 1-WL and hence by standard message-passing GNNs with sum pooling, indicating that a fully symmetric readout layer is too coarse of a choice. On the other hand, the readout should at least respect the symmetries of the graphs. We thus consider $H^{(1)}=D_6 \subset S_6$ which reflects the symmetry of $G^{(1)}$, and $H^{(2)} = C_2 \times D_3 \times D_3 \subset S_6$ reflecting the symmetry group of $G^{(2)}$ ($C_2$ for swapping the triangles, and $D_3$s for applying symmetries to them). In Lemma \ref{lem:irrep_decompositions}, we show that the following are corresponding canonical decompositions of $U=\IR^6$: Let $U^{(i)}$ denote the copy of $U$ on which $H^{(i)}$ acts with the basis given by the labelling in \cref{fig:ex1} and let $\langle \bullet \rangle$ denote the linear span of a set of vectors. Then
    \begin{align*}
        U^{(1)}=&\textcolor{blue}{\langle(1,1,1,1,1,1)^\top \rangle} \oplus \langle(1,-1,1,-1,1,-1)^\top \rangle\\
        &\oplus \langle (1,1,0,-1,-1,0)^\top,(0,1,1,0,-1,-1)^\top \rangle\\
        &\oplus \langle (1,-1,0,1,-1,0)^\top,(0,1,-1,0,1,-1)^\top \rangle \\
        U^{(2)}=&\textcolor{blue}{\langle(1,1,1,1,1,1)^\top \rangle} \oplus \langle(1,1,1,-1,-1,-1)^\top \rangle\\
        &\oplus \langle u_1,u_2,u_3,u_4 \rangle \qquad \qquad \text{with}\\
        u_1=&(1,-1,0,0,0,0)^\top \quad u_3=(0,0,0,1,-1,0)^\top \\
        u_2=&(0,1,-1,0,0,0)^\top \quad u_4=(0,0,0,0,1,-1)^\top 
    \end{align*}
    Since $G^{(1)}$ and $G^{(2)}$ are WL-equivalent, message passing produces identical update trajectories on both graphs \citep[p.~14]{xu2019how}. Even if an equivariant encoder were to populate the non-trivial representation components differently for the two graphs, any linear $H^{(1)}$- (resp.~$H^{(2)}$-) invariant readout would project these differences away and collapse both representations onto the trivial component. In particular, the trivial (blue) component is identical for $G^{(1)}$ and $G^{(2)}$, explaining why standard GNNs cannot distinguish them. In contrast, a readout that accesses non-trivial irreducible components would retain graph-dependent information and render $G^{(1)}$ and $G^{(2)}$ distinguishable.
\end{example}
This observation is the basis for our proposed architectures.

\section{Projector-Based Invariant Readouts}
\label{sec:methods}
We have shown that commonly used linear permutation-invariant readouts, such as sum or mean pooling, effectively reduce node representations to their projection onto the fixed subspace, which in many settings coincides with a global average over nodes.  As a consequence, substantial information produced by message passing layers is discarded at the readout stage.  Moreover, as illustrated in Example~\ref{ex:1}, a representation-theoretic analysis reveals structures that are invisible both to standard readouts and, in certain cases, to message passing itself, enabling the distinction of WL-equivalent graphs.

We now introduce graph-dependent, symmetry-aware readout constructions that exploit this representation-theoretic structure.  Our proposed readouts are invariant under graph automorphisms, yet avoid the averaging-induced bottleneck of classical pooling. 

\subsection{Overview and Principles of Design}  
The readout bottleneck arises from the combination of linearity and global $H$-invariance in standard pooling functions. While breaking linearity alone is not sufficient to overcome this bottleneck, escaping the collapse requires moving beyond linear invariant aggregation.  Since invariance---at least with respect to the automorphism group $\mathrm{Aut}(G)$, which captures the symmetries of the graph---is essential for well-defined graph representations, we instead construct \emph{graph-dependent}, nonlinear $\mathrm{Aut}(G)$-invariant readouts.

Concretely, our readout is a function of both the graph $G$ and the final-layer node embeddings $M$.  It first decomposes $M$ into $\mathrm{Aut}(G)$-invariant channels via representation-theoretic projectors, and then summarizes each channel using simple nonlinear invariant statistics.  This design preserves permutation invariance while retaining information from non-trivial symmetry components that we have shown to be discarded by linear pooling.

\subsection{Equivariant Projectors and Channel Decomposition} 

We now describe how to construct a symmetry-aware decomposition of node representations that will serve as the foundation for our readout.  The goal is to decompose the node feature space into multiple permutation-invariant channels that retain information from non-trivial symmetry components, while remaining computationally feasible.

We consider a graph $G=(V,E)$ with $n=|V|$ nodes and let $H=\Aut(G)$ act on $\IR^{n}$ by permuting the standard basis. Our goal is to define an $H$-invariant decomposition of $U$ together with projection maps onto the respective spaces. The canonical projectors $p_i$ defined in \cref{thm:isotypic} resp.~\cref{app:realvscomplex} isolate irreducible symmetry components and provide a principled way to access symmetry-aware information.  However, this decomposition is expensive to compute: it requires enumerating all irreducible representations of $H$ as well as their characters, which is computationally prohibitive for general graphs.  

Instead, we adopt a computationally efficient alternative that recovers a refinement of the canonical decomposition using only linear algebra.  The key idea is to construct a generic linear operator that commutes with the action of $H$. The eigenspaces of such an operator are necessarily invariant under $H$, and therefore provide symmetry-aware channels.

\textbf{Constructing an Equivariant Operator.}  We construct a linear map $S$ that commutes with the action of $H$, i.e., we want $S: U \rightarrow U$ such that $S \circ \rho(h)=\rho(h) \circ S$ for all $h \in H$, meaning that $S$ is equivariant with respect to the group action. We obtain $S$ using ideas from \citep{maron2019invariant,keriven2019universal}.

Let $[n]:=\{1,\dots,n\}$ and consider the induced action of $H$ on ordered pairs $[n]\times[n]$: $h\cdot(i,j) := (h(i),\, h(j)).$  This action partitions $[n]\times [n]$ into orbits $\mathcal{O}_1,\dots,\mathcal{O}_m$,
where two pairs belong to the same orbit if and only if they are related by a graph automorphism. Each orbit captures a distinct symmetry pattern between node pairs.
For each orbit $\mathcal{O}_t$, we define its indicator matrix $M_t \in \IR^{n\times n}$ by
$$ (M_t)_{ij} :=
\begin{cases}
1 & (i,j)\in \mathcal{O}_t,\\
0 & \text{otherwise}.
\end{cases}
$$
We fix a random seed, sample coefficients $c_1,\dots,c_m\in\IR$ deterministically, and define
\begin{equation}
\label{eq:S-orbitals}
S := \sum_{t=1}^{m} c_t \cdot \frac{M_t+M_t^\top}{2}.
\end{equation}
By construction, $S$ is symmetric and depends only on the automorphism structure of the graph.

\begin{lemma} \label{lem:S_com_action}
    $S$ commutes with the action of $H$.
\end{lemma}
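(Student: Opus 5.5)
The plan is to verify the commutation relation directly on matrix entries, first for each orbit indicator matrix $M_t$, and then extend it to $S$ by linearity. Here $H=\Aut(G)$ acts on $U=\IR^n$ through permutation matrices. For $h\in H$, write $P_h$ for the matrix of $\rho(h)$, so that $(P_h x)_i = x_{h(i)}$ as in the row-permutation convention of \cref{sec:bottleneck}; the argument does not depend on whether one uses $h$ or $h^{-1}$ here. Thus $(P_h)_{ij}=\delta_{h(i),j}$. For any $A\in\IR^{n\times n}$ this gives
$$(P_h A P_h^{-1})_{ij} = A_{h(i),h(j)}.$$
Commutation $P_hA=AP_h$ is therefore equivalent to the condition $A_{h(i),h(j)}=A_{ij}$ for all $i,j$, that is, $A$ is constant on $H$-orbits of $[n]\times[n]$.

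\textbf{Step 1 (orbit indicators).} Since $\mathcal{O}_t$ is an orbit of the diagonal action $h\cdot(i,j)=(h(i),h(j))$, we have $(i,j)\in\mathcal{O}_t$ if and only if $(h(i),h(j))\in\mathcal{O}_t$. Hence $(M_t)_{h(i),h(j)}=(M_t)_{ij}$, and so $P_hM_tP_h^{-1}=M_t$.

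\textbf{Step 2 (transposes).} Because $P_h$ is a permutation matrix, it is orthogonal, so $P_h^{-1}=P_h^\top$. Transposing the identity from Step 1 gives $P_h M_t^\top P_h^{-1}=M_t^\top$. Alternatively, one can observe that the transposed orbit $\{(j,i):(i,j)\in\mathcal{O}_t\}$ is again an orbit, so $M_t^\top$ is itself one of the indicator matrices.

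\textbf{Step 3 (linearity).} The set of matrices commuting with all $P_h$ is a linear subspace, so it is closed under real linear combinations. Since $S=\sum_t c_t\,\tfrac{M_t+M_t^\top}{2}$ is such a combination, $S\rho(h)=\rho(h)S$ for all $h\in H$, which is the claim.

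No step is a real obstacle. The only point needing care is the index convention in $(P_hAP_h^{-1})_{ij}=A_{h(i),h(j)}$, and the orbit-invariance argument works under either convention because $h$ and $h^{-1}$ both lie in $H$. The coefficients $c_t$ play no role, so the choice of random seed is irrelevant to the lemma.
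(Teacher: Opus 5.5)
Your proposal is correct and follows essentially the paper's argument: an entrywise check that each orbit indicator commutes with $\rho(h)$ because orbits are preserved by the diagonal action, a treatment of $M_t^\top$ (the paper uses your alternative observation that the transposed orbit is again an orbit), and then linearity. Your conjugation criterion $A_{h(i),h(j)}=A_{ij}$ is just a repackaging of the paper's comparison of $(M\rho(h))_{ij}=M_{i\,h^{-1}(j)}$ with $(\rho(h)M)_{ij}=M_{h(i)\,j}$.
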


Intuitively, $S$ is a generic symmetry-respecting operator: it treats node pairs equally if and only if they are equivalent under graph automorphisms.  As a result, its eigenspaces yield symmetry-invariant directions in the node feature space.

\textbf{Eigendecomposition and Channel Projectors.}  Since $S$ is symmetric, it admits an eigendecomposition $S=Q\Lambda Q^\top$ where $Q$ is orthogonal and $\Lambda=\mathrm{diag}(\lambda_1,\dots,\lambda_n)$.  We group indices with equal eigenvalues (up to a numerical tolerance) into disjoint blocks
$I_1,\dots,I_q\subseteq [n]$.  Each block corresponds to an eigenspace of $S$, which we interpret as a symmetry-aware channel. For each block $I_\alpha$, we define the orthogonal projector 
$\inlineequation[eq:eigenspace-projector]{
P_\alpha := Q \Lambda_\alpha Q^\top
}$, where $\Lambda_\alpha$ is the diagonal matrix that has entries 1 for all indices belonging to $I_\alpha$ and 0 otherwise.

\begin{lemma} \label{lem:eigenspaces_invariant}
    The maps $P_\alpha$ are orthogonal projectors onto the eigenspace of $S$ and their images $\{P_\alpha(U): 1 \le \alpha \le q \}$ form an orthogonal decomposition of $U$. Moreover, every one of these spaces is invariant under $H$.
\end{lemma}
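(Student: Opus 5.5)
The plan is to handle the two assertions separately: first the purely linear-algebraic fact that the $P_\alpha$ are orthogonal projectors giving an orthogonal decomposition, and then $H$-invariance of each image, which will come from \cref{lem:S_com_action}. Throughout I work with $U=\IR^n$ (the case $U=\IR^{n\times d}$ follows by applying everything columnwise, since $\rho(h)$ acts on rows and $P_\alpha$ acts by left multiplication).

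\textbf{Projector properties.} Since $Q$ is orthogonal and each $\Lambda_\alpha$ is a diagonal $0/1$ matrix, we have $P_\alpha^\top = Q\Lambda_\alpha^\top Q^\top = P_\alpha$ and $P_\alpha^2 = Q\Lambda_\alpha Q^\top Q \Lambda_\alpha Q^\top = Q\Lambda_\alpha^2Q^\top = P_\alpha$, so $P_\alpha$ is an orthogonal projector. Its image is spanned by the columns $q_k$ of $Q$ with $k\in I_\alpha$. Because the blocks group exactly the indices with a common eigenvalue $\lambda^{(\alpha)}$, this span is the full eigenspace $\ker(S-\lambda^{(\alpha)}I)$. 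This uses that every eigenvector for $\lambda^{(\alpha)}$ is a combination of those columns, which holds since the columns of $Q$ form an eigenbasis. The disjointness of the $I_\alpha$ gives $\Lambda_\alpha\Lambda_\beta=0$ for $\alpha\neq\beta$, hence $P_\alpha P_\beta=0$, so the images are mutually orthogonal. Finally, $\sum_\alpha\Lambda_\alpha=I$ since the blocks cover $[n]$, so $\sum_\alpha P_\alpha = QQ^\top = I$ and the images span $U$. Together these give the orthogonal direct sum $U=\bigoplus_\alpha P_\alpha(U)$.

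\textbf{Invariance.} Let $v\in P_\alpha(U)$, so that $Sv=\lambda^{(\alpha)}v$. By \cref{lem:S_com_action}, $S\rho(h)v=\rho(h)Sv=\lambda^{(\alpha)}\rho(h)v$. Hence $\rho(h)v$ lies in the same eigenspace, which we identified above with $P_\alpha(U)$. Equivalently, one can note that $P_\alpha$ is a polynomial in $S$ (Lagrange interpolation on the distinct eigenvalues), so $P_\alpha$ commutes with every $\rho(h)$.

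\textbf{Main obstacle.} The only delicate point is the numerical tolerance used when grouping eigenvalues. Exact invariance requires that each block be a union of exact eigenspaces, i.e.\ that the tolerance never splits a single eigenvalue across blocks. If it merges several distinct eigenvalues into one block, the image becomes a sum of eigenspaces, each $H$-invariant by the argument above, so invariance still holds. I would therefore state the lemma for exact grouping, then remark that merging eigenvalues preserves both orthogonality and invariance, while splitting an eigenvalue is excluded by choosing the tolerance below the eigenvalue multiplicity-resolution.
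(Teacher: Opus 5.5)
Your proposal is correct and follows essentially the same route as the paper. Both arguments expand in the orthonormal eigenbasis given by the columns of $Q$ to get the orthogonal projectors and the orthogonal decomposition, and then use Lemma~\ref{lem:S_com_action} to show that each $\rho(h)$ maps a $\lambda$-eigenspace to itself. You also make explicit a step the paper uses only implicitly, namely that $\spn\{q_k : k\in I_\alpha\}$ is the full eigenspace $\ker(S-\lambda^{(\alpha)}I)$; your tolerance remark correctly notes that this needs grouping that never splits an eigenvalue.
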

Thus, the eigenspaces provide an $H$-invariant channel decomposition of the node representation space.  By \cref{facts:candecomp}, this decomposition refines the isotypic decomposition: each channel lies entirely within a single isotypic component, but may further split it.

\begin{example} \label{ex:2}
    Revisiting the example of \cref{fig:ex1}, possible generic matrices $S$ for $G^{(1)}$ and $G^{(2)}$ are
    \begingroup
    \setlength{\arraycolsep}{3pt}
    $$S^{(1)}=\begin{pNiceMatrix}
        1&2&3&5&3&2\\
        2&1&2&3&5&3\\
        3&2&1&2&3&5\\
        5&3&2&1&2&3\\
        3&5&3&2&1&2\\
        2&3&5&3&2&1
    \end{pNiceMatrix}, \quad S^{(2)}=\begin{pNiceMatrix}
        5&2&2&1&1&1\\
        2&5&2&1&1&1\\
        2&2&5&1&1&1\\
        1&1&1&5&2&2\\
        1&1&1&2&5&2\\
        1&1&1&2&2&5
    \end{pNiceMatrix}.$$
    \endgroup
    Their eigenvalues are $\{16,-2,-5,-5,1,1\}$ and $\{12,6,3,3,3,3\}$, respectively.  The eigendecompositions of $U^{(1)}$ and $U^{(2)}$ are precisely the canonical decompositions presented in Example \ref{ex:1}. Thus, our proposed construction recovers the full representation-theoretic structure.
\end{example}

\textbf{Practical Considerations.}  We sort the projectors $P_\alpha$ by the criteria $\bigl(\tr(P),\ \tr(P L(G)),\ \tr(P A(G))\bigr)$ in descending lexicographic order, where $L(G)$ and $A(G)$ denote the graph Laplacian and adjacency matrix of $G$. We truncate to retain the first $B$ channels and pad with zeros if necessary. The projectors are computed once for each graph.

\subsection{Invariant Summarization}

Having decomposed the node representation space into symmetry-aware channels, we now describe how to summarize each channel into permutation-invariant features suitable for graph-level prediction.

Let $M\in\IR^{n\times d}$ denote the node embeddings produced by the final message passing layer for a graph $G$, and let $P_1,\dots,P_B$ be the sorted and truncated block projectors constructed as above.  For each block projector $P_\alpha$ we define the projected embedding,
\(
M_\alpha := P_\alpha M \in \IR^{n\times d}.
\)
Each $M_\alpha$ isolates the contribution of $M$ lying in a specific symmetry-aware subspace of the node representation space.

To summarize the information contained in each block while preserving permutation invariance, we extract the following nonlinear scalar features from each block $\alpha$:
\begin{align*}
s_1 (\alpha) &:= \left\|\sum_{j=1}^n (M_\alpha)_{j,:}\right\|_2 \! \! \! = \! \left(\sum_{j=1}^n \! \left( \sum_{k=1}^n \bigl(M_\alpha\bigr)_{j,k} \! \right)^2 \right)^{\frac 1 2}\!\!,\\
s_2(\alpha) &:= \|M_\alpha\|_{\mathrm{F}} = \! \left(\sum_{j=1}^n \sum_{k=1}^n \bigl(M_\alpha\bigr)_{j,k}^2 \right)^{\frac 1 2}\!\!,\\
s_3(\alpha) &:= \frac{1}{n}\! \sum_{j=1}^n \|(M_\alpha)_{j,:}\|_2= \! \frac{1}{n} \! \sum_{j=1}^n \! \left( \sum_{k=1}^n \bigl(M_\alpha\bigr)_{j,k}^2 \! \right)^{\frac 1 2}\!\!.
\end{align*}

Intuitively, $s_1$ measures the magnitude of the block-summed signal, $s_2$ measures the total block energy, and $s_3$ captures the typical row-scale within the block. These features capture complementary aspects of the projected signal.

In addition, we include a low-dimensional summary of the mean block embedding. Concretely, we define
\[
\mu(\alpha) := \frac{1}{n}\sum_{j=1}^n (M_\alpha)_{j,:}\in\IR^{d},
\]
and apply a fixed random projection matrix $R\in\IR^{d\times r}$ (parameter \texttt{rp\_dim=r} stored as a non-trainable buffer), to obtain $\mu(\alpha)^\top R \in \mathbb{R}^r$.  This projection provides an efficient, $H$-invariant sketch of the mean feature vector without introducing additional trainable parameters.

The resulting $(3+r)$-dimensional block feature vectors are 
\begin{equation*}
\label{eq:block-vector}
\psi_\alpha(G,M)
:=
\Bigl(
s_1(\alpha),\
s_2(\alpha),\
s_3(\alpha),\
\mu(\alpha)^\top R
\Bigr).
\end{equation*}
Finally, the full readout is the concatenation of the block features across all retained channels:
\begin{equation*}
R_{\mathrm{iso}}(G,M):=\mathrm{Concat}\bigl(\psi_1(G,M),\dots,\psi_B(G,M)\bigr).
\end{equation*}
This invariant summarization step is crucial: by applying nonlinear functions independently to each symmetry-aware channel, the readout preserves information from non-trivial symmetry components that we have shown to be inaccessible to linear pooling, while remaining fully permutation invariant and compatible with standard GNN pipelines.

An optional centering step removes the globally invariant component that is visible to standard pooling, without affecting equivariance; details are deferred to Appendix~\ref{app:centering}.

\subsection{Final Readout Architecture}

Figure~\ref{fig:schema} illustrates the proposed readout architecture applied to a graph $G$ with final-layer node representations $M \in \mathbb{R}^{|V|\times d}$.  The readout is designed as a two-stage procedure that preserves permutation invariance while avoiding the averaging-induced bottleneck of standard pooling.

In the first stage, the node representation matrix $M$ is decomposed into a collection of symmetry-aware channels $\{M_\alpha\}_{\alpha=1}^B$ via a family of graph-dependent linear projections $P_\alpha$.  These projections are constructed from the automorphism structure of $G$, and isolate distinct invariant subspaces of the node representation space.  Importantly, this step does not aggregate across channels, but instead exposes multiple components that would otherwise be collapsed by global pooling.

In the second stage, each projected component $M_\alpha$ is summarized independently using nonlinear, permutation-invariant statistics, and the resulting summaries are concatenated to form the final graph-level representation $R_{\mathrm{iso}}(G,M)$.  This nonlinear aggregation allows the readout to retain information from non-trivial symmetry components while remaining invariant under relabelings of the nodes.

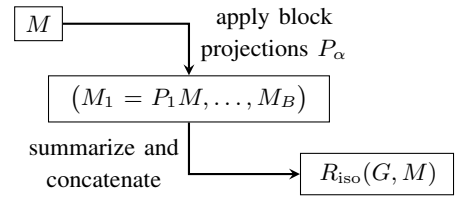
\begin{figure}[h]
    \centering
    \vspace{-.2cm}
    \begin{tikzpicture}[x=1cm,y=1cm]
	\clip(-2.9,-2.3) rectangle (3.6,0.3);
        \node[draw, text width=0.4cm,align=center] (a) at (-2,0) {\small $M$};
        \node[draw, text width=3.5cm,align=center] (b) at (0,-1) {\small $\bigl(M_1 = P_1 M,\dots, M_B\bigr)$};
        \node[draw, text width=1.8cm,align=center] (c) at (2.5,-2) {\small $R_{\mathrm{iso}}(G,M)$};
        \draw [-stealth,thick] (a.east) -- (0,0) -- (b.north) node[midway, text width=2cm,align=center, right,yshift=2mm] {\small apply block projections $P_\alpha$};
        \draw [-stealth,thick] (b.south) -- (0,-2) node[midway, text width=2cm,align=center, left,yshift=-2mm] {\small summarize and concatenate} -- (c.west) ;
    \end{tikzpicture}
    \caption{Schematic representation of our readout} 
    \label{fig:schema}
    \vspace{-.2cm}
\end{figure}

In Lemma \ref{lem:welldef}, we prove that the resulting readout is independent of the chosen node labeling. As a result, the proposed architecture can be viewed as a drop-in replacement for standard sum or mean pooling: it operates solely at the readout level, preserves permutation invariance, and enhances expressivity without modifying the message passing layers.

\section{Experiments}
\label{sec:experiments}

Our experimental design is structured to isolate the contribution of the proposed isotypic decomposition specifically at the readout stage, independently of encoder choice, optimization effects, or architectural complexity.  

We proceed in three tiers: (i) In a \textbf{training-free} separation suite, we freeze randomly initialized encoders and remove learning entirely.  In this setting, two non-isomorphic graphs can be separated only if the encoder-readout map is intrinsically sensitive to structural differences.  This directly targets the theoretically-identified failure mode: on highly symmetric and WL-hard pairs with constant features, message passing often produces node embeddings that are constant on large automorphism orbits, after which standard multiset readouts collapse all remaining information. (ii) We reintroduce \textbf{supervision} on symmetry-heavy benchmarks (SRG-16 and BREC RPC-lite), while preserving strict invariance requirements by testing generalization to unseen node relabelings.  Here, success requires that the readout exposes permutation-invariant signals that distinguish the graphs; if such information has already been erased by pooling, supervision cannot recover it. (iii) Finally, we evaluate \textbf{downstream tasks} to assess whether the proposed readout integrates into standard pipelines. 

Across all tiers, the consistent pattern---chance-level performance for multiset-based pooling and strong separation from the isotypic readout---supports the central claim: in symmetry-heavy regimes, the readout is often the decisive expressivity bottleneck rather than the message-passing encoder.

Code is available upon request. Implementation details, computing infrastructure, and runtimes are given in Appendix~\ref{app:runtime}.

\subsection{Training-Free Separation on WL-Equivalent Pairs}
\label{subsec:trainfree_sep}

\textbf{Setup.}
We evaluate a training-free separation suite to isolate \emph{readout-level} information loss.
For each WL-equivalent but non-isomorphic pair $(G,G')$, we sample a random encoder and a fixed readout, apply independent random node relabelings,
and compare $\ell_2$-normalized graph embeddings \emph{after the readout} via cosine similarity $\cos(z(G),z(G'))$.
No parameters are trained. A pair is counted as \emph{separated} if the mean similarity across seeds is $<0.95$.
We consider 36 instances from 4 WL-hard families: $2C_k$ vs.\ $C_{2k}$; CFI on $K_3/K_4$ \cite{cai1992optimal}; helical ribbons; and Godsil--McKay switching on the Petersen graph \cite{godsil1982constructing}.

\textbf{Results and Interpretation.}
Standard multiset pooling collapses almost perfectly, with similarity $\approx 1.00$ across all tested encoders and pairs
(Table~\ref{tab:trainfree_main}).
In contrast, the isotypic readout separates 33/36 instances consistently across GIN \cite{xu2019how}, GraphSAGE \cite{hamilton2017inductive},
and PNA \cite{corso2020pna}.
Since no optimization is involved, this directly reflects the geometric failure mode predicted by the theory:
on these symmetry-heavy WL-hard pairs (with constant features), message passing produces node embeddings whose multiset statistics coincide,
so multiset readouts collapse, while the proposed graph-dependent decomposition exposes invariant structure that pooling cannot.
A GIN family breakdown is in Table~\ref{tab:tf_family_gin}; the remaining failures are the helical ribbons, suggesting a limitation of the current
blockwise statistics rather than invariance itself.

\textbf{Ablations.}
Training-free ablations (Appendix~\ref{app:trainfree_ablations}) isolate the roles of graph-dependent projectors and nonlinear blockwise summaries:
linearized variants collapse to pooling, while increasing the block budget yields a monotone gain that saturates at a small number of blocks.

\subsection{Supervised Symmetry-Heavy Benchmarks}
\label{subsec:srg16}

We now reintroduce supervision on symmetry-heavy tasks where invariance to node relabeling is mandatory and evaluation is
performed on held-out relabelings. In this regime, success depends on whether the readout exposes a permutation-invariant
yet graph-sensitive signal; if pooling collapses the representation, labels cannot recover it.

\textbf{SRG-16.}  SRG-16 is a binary classification task distinguishing the Shrikhande graph from the $4\times4$ rook graph \cite{seidel1979strongly}, with each example presented under a random node relabeling.  Success therefore requires permutation-invariant representations that still encode graph-level structure.

We evaluate GIN, GraphSAGE, GATv2~\cite{brody2021attentive}, PNA, and a graph transformer, swapping only the readout while
keeping the encoder and training protocol fixed. Across all encoders, multiset pooling remains near chance ($\approx 0.45$),
whereas the isotypic readout achieves perfect accuracy ($1.00$). This supervised analogue of the training-free suite shows
that the encoder can produce informative features, but symmetry-induced collapse at the readout prevents the classifier from
accessing them; changing only the readout resolves the problem, indicating that the readout is the limiting factor here.

\textbf{BREC (RPC-lite).}  BREC~\cite{brec} is designed as a broad distinguishability stress test via non-isomorphic pairs of increasing difficulty,
many of which are challenging for standard message-passing GNNs with multiset pooling. We therefore use BREC as a primary
paired-comparison evaluation of readout-level expressivity. We adopt a specific experimental set-up that we call RPC-lite: each pair $(G_0,G_1)$ is expanded into a
balanced binary classification task by sampling random node relabelings, training on a subset of permutations, and testing
on held-out permutations. This directly operationalizes the requirement ``permutation-invariant yet pair-sensitive'' under a
fixed compute budget.

All runs use constant node features and hold the encoder fixed while swapping only the readout.  Table~\ref{tab:brec_rpclite_main} reports average test accuracy. Across the whole BREC dataset, all multiset-based readouts remain at chance ($0.5$), while the isotypic readout achieves strong performance ($0.901$ ). As a supervised analogue of the training-free separation tests, BREC RPC-lite shows that access to labels and end-to-end optimization does not overcome the fixed-subspace bottleneck of pooling: when the readout collapses structurally distinct embeddings, supervision has no signal to exploit. In contrast, the isotypic readout exposes graph-dependent invariant structure to the classifier, enabling generalization across relabelings and confirming that the readout—not the loss or encoder—is the limiting factor in this regime.

Ablations are reported in Table~\ref{tab:brec_blocksweep}.

Due to the computational cost of symmetry-aware projectors, we include a partial run of the official BREC protocol in Appendix~\ref{subsec:brec_official_partial} as a supplementary protocol-level check, which further confirms that replacing only the readout improves distinguishability under the benchmark’s standard evaluation.

\vspace{-.1cm}
\subsection{Downstream Analysis: Benchmarks, Regression, and Classification (GIN only)}
\label{subsec:downstream}

The previous experiments deliberately target symmetry-heavy discrimination, where standard pooling is most brittle.  We finally test whether the proposed readout integrates into standard supervised pipelines on real-world tasks, and whether it enhances performance even when symmetry is not the dominant challenge.  We fix the encoder to GIN and compare three readouts: sum pooling, the pure isotypic readout, and a combination readout that combines sum and isotypic channels.

\textbf{Proteins and Enzymes.}  We evaluate on the TUDatasets PROTEINS and ENZYMES \cite{morris2020tudataset} graphs with a spectral regression target derived from global structure.  

\textbf{Proteins and Enzymes: Setup, Results, and Interpretation.}
For each graph $G$, we compute the normalized Laplacian $L = I - D^{-1/2} A D^{-1/2}$, discard the trivial eigenvalue at $0$,
and regress the next $k=8$ eigenvalues.  This target is permutation-invariant and global, providing a controlled setting in which the readout must summarize
structure beyond local neighborhoods.  Models are trained end-to-end with MSE loss, and we report test MAE and $R^2$ using the checkpoint with best validation MAE.

\vspace{-.1cm}
\begin{table}[h]
\centering
\caption{Spectral regression with GIN. Lower MAE is better; higher $R^2$ is better.}
\label{tab:spectral_main}
\small
\begin{tabular}{lcccc}
\toprule
& \multicolumn{2}{c}{\textbf{PROTEINS}} & \multicolumn{2}{c}{\textbf{ENZYMES}} \\
\cmidrule(lr){2-3}\cmidrule(lr){4-5}
\textbf{Readout} & \textbf{MAE} & $\mathbf{R^2}$ & \textbf{MAE} & $\mathbf{R^2}$ \\
\midrule
sum               & 0.0869 & 0.8877 & 0.0978 & 0.4360 \\
isotypic          & \textbf{0.0824} & \textbf{0.8940} & \textbf{0.0867} & \textbf{0.6181} \\
sum + isotypic    & 0.0858 & 0.8896 & 0.0883 & 0.5853 \\
\bottomrule
\end{tabular}
\end{table}
\vspace{-.1cm}

Table~\ref{tab:spectral_main} reports performance on both datasets. It shows that the isotypic readout improves over sum pooling on PROTEINS and yields a substantial gain on ENZYMES in both MAE and $R^2$, indicating increased sensitivity to global spectral structure.  The combined readout performs comparably, suggesting that the isotypic channel captures information complementary to pooling.

\textbf{ZINC and MolHIV.}  We further evaluate on two additional standard molecular benchmarks.  ZINC \cite{dwivedi2023benchmarking} is a molecular regression task with strong atom- and bond-level features, while MolHIV (OGB) \cite{hu2020open} is a binary molecular classification task.

\textbf{ZINC and MolHIV: Setup, Results, and Interpretation.} For ZINC, we follow the standard PyG split
and report test MAE; for MolHIV, we use the official split and report test ROC--AUC.

\begin{table}[h]
\centering
\caption{Molecular benchmarks with GIN. ZINC: MAE (lower is better). MolHIV: ROC--AUC (higher is better).}
\label{tab:molecules_main}
\small
\begin{tabular}{lcc}
\toprule
\textbf{Readout} & \textbf{ZINC MAE} & \textbf{MolHIV ROC--AUC} \\
\midrule
sum               & \textbf{0.3816} & 0.7118 \\
isotypic          & 0.6117 & 0.7761 \\
sum + isotypic    & 0.3987 & \textbf{0.7916} \\
\bottomrule
\end{tabular}
\end{table}

Table~\ref{tab:molecules_main} summarizes the results: On ZINC, a pure isotypic readout is not competitive with sum pooling, consistent with the effectiveness of established
local aggregation heuristics.  On MolHIV, however, augmenting sum pooling with the isotypic channel yields a clear improvement in ROC--AUC.

Across downstream tasks, we see that the isotypic readout is not a drop-in replacement for pooling,
but acts as a complementary invariant descriptor.  When global structure is predictive (as in spectral regression or MolHIV), the additional symmetry-aware information improves performance, while in regimes dominated by local chemistry (ZINC), standard pooling remains preferable.

\emph{Remark.}  While our theory covers linear readouts, we observe the same collapse for standard nonlinear invariant readouts, suggesting a more general limitation of permutation-invariant aggregation.

\section{Discussion}

We show that GNN readouts impose an expressivity bottleneck independent of message passing: any linear permutation-invariant readout collapses node embeddings and discards symmetry-aware information.  Our theory makes this collapse explicit, and our experiments show that it prevents standard pooling from exploiting structure already present in the embeddings.  A graph-dependent invariant readout that decomposes embeddings into symmetry-aware channels and applies nonlinear summaries exploits this otherwise lost information, yielding strong gains on symmetric and WL-hard graphs.  Together, these results identify the readout as a decisive component of GNN expressivity. 

\textbf{Limitations and Future Work.} Computing automorphisms and our channel decompositions scale poorly for large or highly symmetric graphs, motivating approximate or sparse constructions.  Our method also cannot separate graphs with identical symmetry groups, and must be complemented by the encoder or richer invariant summaries.  Finally, a deeper understanding of when message passing collapses and which invariant features suffice for recovery remains open. 

\section*{Acknowledgments}

A.M.~is supported by the EPSRC AI Hub on Mathematical Foundations of Intelligence:
An ``Erlangen Programme'' for AI [EP/Y028872/1].  A.W.~is funded by a London School of Geometry and Number Theory–Imperial College London
PhD studentship, which is supported by the Engineering and Physical Sciences Research Council
[EP/S021590/1].

\section*{Impact Statement}

This paper presents work whose goal is to advance the field of 
Machine Learning. There are many potential societal consequences 
of our work, none which we feel must be specifically highlighted here.

\bibliography{references.bib}
\bibliographystyle{icml2026}

\newpage
\appendix
\renewcommand{\thetable}{\thesection.\arabic{table}}
\onecolumn

\section{Related Work}
\label{app:related_work}
In this section of the Appendix, we provide a more detailed review of related research. Our work lies at the intersection of three research directions: (i) permutation-invariant learning on sets and graphs; (ii) expressivity theory for message-passing GNNs; and (iii) representation-theoretic perspectives on symmetry and invariance. We have grouped the related work accordingly.

\subsection{Permutation-Invariant Set Functions and Graph Readouts}

Most GNNs obtain graph-level representations by aggregating node embeddings with a permutation-invariant operator. This practice is often justified by results on invariant set functions: Deep Sets show that, under mild conditions, any permutation-invariant map can be written as $\rho(\sum_v \phi(x_v))$ for suitable functions $\phi,\rho$ \citep{zaheer2017deep}. This is typically used as justification for the widespread use of sum and mean pooling in graph classification.

Several more expressive readouts have been proposed, including sequence-to-sequence pooling \citep{vinyals2016order},
Janossy pooling \citep{murphy2019janossy}, Set2Set \citep{gilmer2017neural}, attention-based pooling, and richer aggregation schemes such as PNA \citep{corso2020pna}.  Our work fits into this line, but targets a specific and previously uncharacterized failure mode: multiset pooling can erase graph-dependent information that is still present in the node representation.

\subsection{Expressivity of Message Passing and WL Limitations}
A large literature relates message-passing GNNs to the Weisfeiler--Leman (WL) refinement procedure \citep{weisfeiler1968reduction}.  In particular, GIN \citep{xu2019how} establishes that standard message passing is upper bounded by 1-WL under common initializations, explaining the difficulty in working with many non-isomorphic graph families, including Cai--Fürer--Immerman constructions \citep{cai1992optimal} and strongly regular graphs.  These limits are usually framed as \emph{encoder} limitations.  Our contribution is complementary: we identify a regime where the encoder may in principle produce structured, symmetry-aware embeddings, but the readout forces a collapse through averaging, creating
an independent bottleneck at the aggregation stage.

One way to overcome WL limitations is to increase representational order.  Higher-order GNNs and WL-inspired models operate on node tuples or higher-order tensors, yielding stronger invariants \citep{morris2019weisfeiler,maron2019provably}. Invariant and equivariant architectures construct features that span rich spaces of graph invariants \citep{maron2019invariant}. Another route is to enrich node states with positional or structural encodings, obtained for example from graph distances \citep{li2020distance}, topological data analysis \citep{horn_topological_2021}, or spectral information \citep{lim2022signnet}. A complementary class of approaches considers the graphs together with marked subgraphs \citep{bevilacqua2022esan, bouritsas2020gsn, zhang2021nestedgnn,southern2025balancing}. These approaches can greatly improve expressivity, but often at substantial computational cost. Our goal is different: we keep standard message-passing encoders fixed and instead modify only the readout to expose graph-dependent invariant structure that global pooling discards.

\subsection{Representation Theory and Equivariance in Deep Learning}  
Representation-theoretic ideas have played a central role in modern equivariant learning,
from group-equivariant CNNs \citep{cohen2016group} to general frameworks for equivariant neural networks
\citep{kondor2018generalization} and E(3)/E(n)-equivariant geometric models \citep{satorras2021en}.
In graph learning, these ideas and representation theory in general typically appear as architectural constraints: networks are designed so that all layers transform equivariantly under a symmetry group, such as rotational symmetry \citep{cohen2018spherical}.

Our use of representation theory is different. Rather than building new equivariant layers, we apply representation theory as an \emph{interpretability and analysis tool}: to characterize how standard pooling acts on the symmetry structure of learned node embeddings and to explain why many GNNs fail on symmetric graph families.
Orbits and their indicator matrices provide a principled way to decompose node representations into invariant subspaces, yielding a readout-level intervention that remains compatible with standard message-passing backbones while directly addressing the averaging bottleneck revealed by the theory.

\section{Additional Theory and Proofs}
\label{app:theory}
This part of the Appendix contains additional theory on real, as compared to complex representation theory, as well as proofs for all statements that are not directly cited from the literature.

\subsection{Real vs.~Complex Representation Theory}
\label{app:realvscomplex}
We explain how the representation–theoretic results used in this paper carry over from complex to real vector spaces. The definitions of representations and irreducibility are unchanged; the only difference is that we now work with real vector spaces instead of complex ones.

The Maschke decomposition of representation into their irreducible representations continues to hold over $\IR$ and more generally over any field $K$ with $\mathrm{char}(K) \nmid |H|$ \citep[Theorem 10.8]{curtis_reiner}.  In fact, the proof given in \cite{serre_RT} over the complex numbers carries over directly to the reals.

Isotypic decompositions also exists over the reals \citep[p.~156]{procesi}. However, unlike in the complex case, characters need not be real-valued, and consequently one does not in general obtain closed-form expressions for the isotypic projectors analogous to \eqref{eq:character_proj}. The Reynolds operator \eqref{eq:avgop}, by contrast, is always defined over $\mathbb{R}$ and remains unchanged \citep[Example 2.2.3]{derksen_kemper}.

A sufficient condition for the existence of real-valued characters, and hence real isotypic projectors, is given by the following classical result.

\begin{theorem}[Frobenius--Schur, \cite{serre_RT}, Theorem 31] \label{thm:frobenius-schur}
Let $\rho:H \to \gl(U)$ be a representation. In order that $\chi_\rho$ is real-valued it is necessary and sufficient that $U$ has a symmetric bilinear form invariant under $H$.
\end{theorem}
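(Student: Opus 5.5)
We read the statement for a representation $\rho:H\to\gl(U)$ and an invariant symmetric bilinear form $B$ that is \emph{nondegenerate}. Otherwise the zero form makes sufficiency vacuous and necessity false. The proof has two directions, each going through the dual representation $\rho^*(h)=\rho(h^{-1})^\top$ on $U^*$, whose character is $\chi_{\rho^*}(h)=\chi_\rho(h^{-1})=\overline{\chi_\rho(h)}$. The last equality holds because each $\rho(h)$ has finite order, so its eigenvalues are roots of unity.

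\textbf{Necessity.} Suppose $B$ is a nondegenerate $H$-invariant symmetric bilinear form, so $B(\rho(h)u,\rho(h)v)=B(u,v)$. Define $\tau:U\to U^*$ by $u\mapsto B(u,\cdot)$. Invariance gives $\tau\circ\rho(h)=\rho^*(h)\circ\tau$, and nondegeneracy makes $\tau$ an isomorphism, so $\rho\cong\rho^*$. Hence $\chi_\rho=\chi_{\rho^*}=\overline{\chi_\rho}$, i.e.\ $\chi_\rho$ is real-valued. Symmetry of $B$ is not used in this direction.

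\textbf{Sufficiency, real case.} In the setting of this appendix $U$ is a real vector space. Here $\chi_\rho$ is automatically real, so we must produce a form. Start from any inner product $\langle\cdot,\cdot\rangle$ on $U$ and average it:
\[
B(u,v):=\frac{1}{|H|}\sum_{h\in H}\langle\rho(h)u,\rho(h)v\rangle .
\]
This is the Reynolds operator \eqref{eq:avgop} applied to the representation on bilinear forms. It is $H$-invariant, symmetric, and positive definite, hence nondegenerate.

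\textbf{Sufficiency, complex case.} Now let $U$ be complex. By \cref{thm:maschke} and \cref{thm:isotypic}, write $U=\bigoplus_i U_i$. Since characters determine representations (\cref{thm:characters}), $\chi_\rho=\overline{\chi_\rho}$ means each irreducible $W$ occurs in $U$ with the same multiplicity as $W^*$. We build $B$ as an orthogonal sum of forms on the following pieces.
\begin{itemize}
\item For a pair $W\not\cong W^*$, the $W$- and $W^*$-isotypic components pair to $W^{\oplus k}\oplus(W^*)^{\oplus k}$. On this piece take the symmetrized evaluation pairing $B((w,\phi),(w',\phi'))=\phi'(w)+\phi(w')$, which is invariant, symmetric and nondegenerate.
\item For self-dual $W$, Schur's lemma makes the space of invariant bilinear forms on $W$ one-dimensional. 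Since the transpose map preserves this space, its generator is either symmetric or alternating. In the symmetric (real-type) case, take the orthogonal sum over copies of $W$.
\end{itemize}

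\textbf{Main obstacle.} The difficult case is self-dual $W$ whose invariant form is alternating (quaternionic type). A single copy of such $W$ admits no nonzero invariant symmetric form. My plan is to pair copies: if $\beta$ is the alternating form on $W$ and $\omega$ is the standard alternating form on $\IC^2$, then $\beta\otimes\omega$ is a symmetric nondegenerate invariant form on $W\otimes\IC^2=W\oplus W$.

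This handles every even multiplicity. If some quaternionic $W$ appears with odd multiplicity, the complex statement fails as worded, and we would restrict to one of two readings. The first is Serre's version for irreducible $U$, which asks for a nonzero invariant form rather than a symmetric one. The second is the real-vector-space setting, which is the one actually used in the paper and is fully handled by the averaging argument.
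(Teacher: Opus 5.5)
Your proposal is correct, and it does more than the paper, which gives no argument of its own and only cites Serre. You are right that the statement cannot be proved as transcribed. If the zero form counts, it asserts that every character is real. Even with nondegeneracy, the implication from a real character to a symmetric invariant form fails over $\IC$: the $2$-dimensional irreducible representation of the quaternion group $Q_8$ has a real character and Frobenius--Schur indicator $-1$, so it admits only alternating invariant forms.

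Your odd-multiplicity claim, which you stated without proof, also holds. An invariant form pairs the $W$-isotypic component only with the $W^*$-isotypic one. So for quaternionic $W$, a nondegenerate invariant form restricts to a form $\sum_{i,j}a_{ij}\beta(w_i,w'_j)$ on $W^{\oplus k}$ with $(a_{ij})$ invertible. Symmetry forces $(a_{ij})$ to be skew-symmetric, which is impossible for odd $k$.

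Two small points. First, your labels are swapped relative to the wording: ``form $\Rightarrow$ real character'' is the \emph{sufficiency} half of ``in order that $\chi_\rho$ be real it is necessary and sufficient that\dots''. Second, the cleanest repair valid for every complex $U$ is to drop ``symmetric'' and keep ``nondegenerate''. If $\chi_\rho=\overline{\chi_\rho}=\chi_{\rho^*}$, then $U\cong U^*$ by \cref{thm:characters}(a), and any equivariant isomorphism $\tau:U\to U^*$ gives the nondegenerate invariant form $B(u,v)=\tau(u)(v)$. Your duality argument then settles both directions at once.

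One correction to your closing remark: the paper does not use the theorem over $\IR$. There it has no content, since traces of real matrices are real and averaging always supplies the form. The paper uses it over $\IC$, in \cref{cor:real}, for the permutation representation on $U\cong\IC^n$. What is needed there is exactly the direction you prove first, applied to the bilinear form $\sum_k u_k w_k$, which is symmetric, nondegenerate and permutation-invariant. The form $\sum_k \overline{u}_k w_k$ used in that proof is Hermitian rather than bilinear. Every representation of a finite group preserves a positive definite Hermitian form, so invoking it would show that all characters are real, which is false. For a permutation representation, $\chi_\rho(h)$ is just the number of points fixed by $h$, so reality is immediate anyway.
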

A proof can be found in \citep[p.~107]{serre_RT}.

In particular, for permutation representations including the ones considered in this paper, this condition is always satisfied:

\begin{corollary} \label{cor:real}
    If a group $H$ acts on $U$ by permuting a basis, the character of the corresponding representation is real-valued.
\end{corollary}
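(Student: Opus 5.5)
The plan is to apply \cref{thm:frobenius-schur} with an explicit invariant symmetric bilinear form. Since $H$ acts on $U$ by permuting a basis $e_1,\dots,e_n$, there is a permutation action of $H$ on $[n]$, which we also write $h$, such that $\rho(h)e_j = e_{h(j)}$.

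Define the bilinear form $B$ by
\[
B\Bigl(\textstyle\sum_j a_j e_j, \sum_k b_k e_k\Bigr) := \sum_j a_j b_j .
\]
This is the standard dot product in the coordinates of the permuted basis, and it is clearly symmetric. To check $H$-invariance, it is enough by bilinearity to test basis vectors. We have
\[
B(\rho(h)e_j, \rho(h)e_k) = B(e_{h(j)}, e_{h(k)}) = \delta_{h(j),h(k)} = \delta_{jk} = B(e_j,e_k),
\]
and the middle equality holds because $h$ acts as a bijection on $[n]$. So $B$ is an $H$-invariant symmetric bilinear form on $U$, and the ``sufficient'' direction of \cref{thm:frobenius-schur} gives that $\chi_\rho$ is real-valued.

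There is also a direct check that avoids \cref{thm:frobenius-schur} entirely. In the basis $e_j$, each $\rho(h)$ is a permutation matrix, so $\chi_\rho(h)=\tr(\rho(h))$ equals the number of fixed points of $h$ on $[n]$. This is a nonnegative integer, and in particular a real number. I would record this as a one-line remark, since it is even more elementary and also shows that the character is integer-valued.

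There is no substantial obstacle here. The only point needing care is that $\rho(h)$ maps basis vectors to basis vectors bijectively, which is exactly what makes $B$ invariant and makes the trace count fixed points. If the vector space is taken over $\IC$, the argument is unchanged: $B$ is complex bilinear, not sesquilinear, which is the setting Frobenius--Schur requires.
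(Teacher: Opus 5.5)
Your proof is correct. Your main argument follows the paper's route (the standard form in the permuted basis plus the sufficiency half of Frobenius--Schur), but it differs in a way that matters.

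The paper uses the Hermitian product $\sum_k \overline{u}_k w_k$, which is sesquilinear rather than symmetric bilinear. Read literally, that step proves too much: every finite-group representation has an invariant Hermitian inner product. For example, $C_3$ acting on $\IC$ by a primitive cube root of unity $\omega$ preserves $\overline{u}w$, yet its character takes the value $\omega$. Your unconjugated $B$ is the correct form to use. It is invariant because the $\rho(h)$ are real orthogonal matrices, so your closing remark is exactly the repair the paper's argument needs.

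You should add that $B$ is nondegenerate (its Gram matrix is the identity), since the zero form is also symmetric and invariant. Nondegeneracy makes $u\mapsto B(u,\cdot)$ an $H$-isomorphism $U\to U^*$, whence $\chi_\rho(h)=\chi_\rho(h^{-1})=\overline{\chi_\rho(h)}$. This argument works for reducible $U$ as well, whereas the classical Frobenius--Schur theorem concerns irreducibles.

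Your second argument, $\chi_\rho(h)=\tr\rho(h)=\#\{j: h(j)=j\}$, is a genuinely different route, and for this statement a better one. It takes one line and needs no external theorem. It therefore does not rest on the paper's formulation of Frobenius--Schur, which omits nondegeneracy and, once that is imposed, has a necessity half that fails for quaternionic-type irreducibles. It also gives the stronger conclusion that $\chi_\rho$ takes nonnegative integer values. What the form-based route buys is the conceptual reason, self-duality $U\cong U^*$, but that is more machinery than the permutation setting needs.
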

\begin{proof}
By \cref{thm:frobenius-schur}, it suffices to find a symmetric bilinear form of $U\cong \IC^n$ that is invariant under $H$.  Such a form is given by the standard inner product (with respect to the basis permuted by $H$)
$$\langle u,w\rangle = \sum_{k=1}^n \overline u_k w_k \quad \forall u,w \in U,$$
since permuting the basis amounts to permuting the summands which leaves the sum unchanged.
\end{proof}

As a consequence, all canonical projectors and decompositions used in this paper admit realizations on the real subspace $U\cong\mathbb{R}^n$ of $\IC^n$.  We now explain how the representation–theoretic results stated over $\mathbb{C}$ are applied in the real-valued settings of Sections~\ref{sec:bottleneck} and~\ref{sec:methods}.

Let $H$ act on $\mathbb{C}^n$ by permuting the standard basis according to graph automorphisms and let $U \cong \mathbb{R}^n$ denote the corresponding real subspace of $\IC^n$. The canonical projectors $p_i$ defined in \cref{thm:isotypic} are real-valued by Corollary~\ref{cor:real} and therefore project $U$ to the real components of the complex canonical decomposition, yielding a real canonical decomposition $U=U_1 \oplus \dots \oplus U_m$. Consequently, all results stated in \cref{sec:background} for complex numbers also hold over the reals for the concrete representations considered in this paper.

\subsection{Proofs and Supplementary Facts}
\label{app:proofs}
We start by providing two additional statements, that give further intuition for the important representation theoretic notions. The first one demonstrates the eponymous role of characters as summaries and classifiers of their respective representations.

\begin{theorem}[Properties of Characters, \citet{serre_RT}, Corollary 2 and Theorem 5] \label{thm:characters}
    \begin{enumerate}
        \item[a)] Two representations with the same character are isomorphic.
        \item[b)] Let $\chi$ be the character of a representation. Then the representation is irreducible if and only if $$\frac{1}{|H|} \sum_{h \in H} |\chi(h)|^2=1.$$
    \end{enumerate}
\end{theorem}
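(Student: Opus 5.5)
The statement just above is \cref{thm:characters}, which the paper cites from \citet{serre_RT}; I would reprove both parts from the orthogonality relations for irreducible characters. Equip the space of class functions on $H$ with the inner product
\[
\langle \phi,\psi\rangle := \frac{1}{|H|}\sum_{h\in H}\overline{\phi(h)}\,\psi(h).
\]
The foundational step, and the main obstacle, is to show that the irreducible characters $\chi_1,\dots,\chi_m$ are orthonormal for this inner product.

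To prove orthonormality I would use Schur's lemma. If $\phi:W_i\to W_j$ is $H$-equivariant between irreducible representations, then $\ker\phi$ and $\im\phi$ are invariant subspaces. Hence $\phi$ is either $0$ or an isomorphism. Over $\IC$, an equivariant endomorphism of $W_i$ is a scalar, since any eigenspace is invariant.

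Next, for an arbitrary linear map $x:W_i\to W_j$, average it to
\[
x^0=\frac{1}{|H|}\sum_{h\in H}\rho_j(h)^{-1}\,x\,\rho_i(h),
\]
exactly in the spirit of the Reynolds operator \eqref{eq:avgop}. This $x^0$ is equivariant. So $x^0=0$ if $i\neq j$, and if $i=j$ then $x^0=\frac{\tr(x)}{\dim W_i}\,\mathrm{id}$, where the scalar is fixed by taking traces. Write this out in matrix entries, letting $x$ range over elementary matrices, and use that $\rho_j(h)^{-1}$ is unitary for an $H$-invariant inner product (average any inner product over $H$). This gives
\[
\langle \chi_i,\chi_j\rangle=\delta_{ij}.
\]

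For part a), decompose a representation $\rho$ as $U\cong\bigoplus_i W_i^{\oplus n_i}$ by \cref{thm:maschke}. Characters are additive under direct sums, and isomorphic representations have equal traces. Therefore $\chi_\rho=\sum_i n_i\chi_i$, and orthonormality gives $n_i=\langle \chi_i,\chi_\rho\rangle$. The multiplicities are thus determined by $\chi_\rho$, so two representations with the same character are isomorphic.

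For part b), note that
\[
\frac{1}{|H|}\sum_{h\in H}|\chi(h)|^2=\langle\chi,\chi\rangle=\sum_i n_i^2.
\]
Since the $n_i$ are nonnegative integers, this sum equals $1$ exactly when a single $n_i$ equals $1$ and the rest vanish, i.e.\ exactly when the representation is irreducible. All of the real difficulty lies in the orthogonality relations; parts a) and b) follow from them almost immediately.
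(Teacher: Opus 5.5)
Your proposal is correct and follows essentially the same route as the proof the paper defers to in Serre (pp.~16--17). There, Schur's lemma plus averaging gives orthonormality of the irreducible characters, the multiplicity formula $n_i=\langle\chi_i,\chi_\rho\rangle$ yields part a), and $\langle\chi,\chi\rangle=\sum_i n_i^2$ yields part b), with the only cosmetic difference being that Serre gets $\chi(h^{-1})=\overline{\chi(h)}$ from the eigenvalues of $\rho(h)$ being roots of unity rather than from unitarity for an averaged inner product.
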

A proof can be found in \citep[p.~16, 17]{serre_RT}.

The following statement collects two further properties of the isotypic decomposition and its trivial component $U_1$.
\begin{theorem} \label{facts:candecomp}
\begin{enumerate}
    \item[a)] $U_1$ is precisely the subspace of $U$ that is fixed under the action of $H$.
    \item[b)] Every decomposition of $U$ into $H$-invariant subspaces refines the canonical decomposition.
\end{enumerate}
\end{theorem}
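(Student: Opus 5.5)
For part a), I will prove both inclusions using the averaging formula \eqref{eq:avgop}. First, if $u \in U^H := \{u : \rho(h)u = u \ \forall h\}$, then $p_{\text{avg}}(u) = \frac{1}{|H|}\sum_h \rho(h)u = u$. Hence $u \in \im(p_{\text{avg}}) = U_1$. Conversely, for any $v\in U$ and $g\in H$ we have
\[
\rho(g)\,p_{\text{avg}}(v) = \frac{1}{|H|}\sum_{h} \rho(gh)v = p_{\text{avg}}(v),
\]
because $h \mapsto gh$ is a bijection of $H$. So $\im(p_{\text{avg}}) \subseteq U^H$. Since $p_{\text{avg}} = p_1$ is the projector onto $U_1$ from \cref{thm:isotypic}, we get $U_1 = \im(p_1) = U^H$.

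An alternative route is via the definition of $U_1$ as the sum of trivial irreducible subrepresentations. Each such subrepresentation is a line on which $H$ acts trivially, so it lies in $U^H$. Conversely, any nonzero $u\in U^H$ spans a trivial subrepresentation $\langle u\rangle$, so $U^H\subseteq U_1$. I would present this version as the primary argument, since it needs only the definition, and use the averaging computation as confirmation.

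For part b), the claim as I read it is this: if $U=V_1\oplus\dots\oplus V_k$ with each $V_j$ an $H$-invariant subspace, then each canonical component satisfies $U_i=\bigoplus_j (V_j\cap U_i)$. Equivalently, each $V_j$ decomposes compatibly with the $U_i$. The plan is as follows.
\begin{enumerate}
\item Apply \cref{thm:isotypic} to each $V_j$ viewed as a representation in its own right, giving $V_j=\bigoplus_i V_{j,i}$.
\item By uniqueness of the canonical decomposition of $U$, identify $U_i=\bigoplus_j V_{j,i}$. The right-hand sides form a decomposition of $U$ whose $i$-th summand is a sum of irreducibles isomorphic to $\rho_i$.
\end{enumerate}
Uniqueness needs a little care. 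I would argue it directly: the projector $p_i$ in \eqref{eq:character_proj} is a polynomial in the $\rho(h)$, so it preserves every invariant subspace $V_j$. Its restriction to $V_j$ is exactly the canonical projector of $V_j$ for $\rho_i$. Therefore $p_i(V_j)=V_j\cap U_i$, and summing over $j$ gives $U_i=\bigoplus_j (V_j\cap U_i)$.

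The main obstacle is interpretive. As stated, ``refines'' cannot mean that each $V_j$ lies inside a single $U_i$; a counterexample is $V_1=U$. So the statement must be read as compatibility, namely $V_j=\bigoplus_i (V_j\cap U_i)$. The stronger containment $V_j \subseteq U_i$ for a single $i$ does hold for the eigenspaces in \cref{lem:eigenspaces_invariant}, but only via genericity of $S$. I would therefore state and prove the compatibility version. I would also remark that whenever each $V_j$ is isotypic, for instance irreducible, each $V_j$ lies in a single $U_i$. The key fact here is that $p_i$ commutes with $\rho(h)$, which follows from reindexing the sum in \eqref{eq:character_proj} by conjugation, using that characters are class functions.
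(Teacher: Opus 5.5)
Your part (a) is exactly the paper's proof: both inclusions $U^H\subseteq\im(p_{\text{avg}})$ and $\im(p_{\text{avg}})\subseteq U^H$, via the averaging formula and the bijection $t\mapsto ht$. Your ``definitional'' variant is also fine under the paper's phrasing of \cref{thm:isotypic}, where $U_1$ is the sum of \emph{all} trivial subrepresentations. In Serre's setup, however, $U_1$ is built from one chosen irreducible decomposition, and showing that an arbitrary fixed line lies in $U_1$ is essentially the averaging computation again.

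Part (b) is where you genuinely differ, and your treatment is the more careful one. The paper proves (b) by a single appeal to uniqueness: ``combining the isomorphic irreducible representations in any given $H$-invariant decomposition yields the canonical decomposition.'' This tacitly assumes the pieces are irreducible (or isotypic). Under that assumption it gives refinement in the strong sense, with each piece inside a single $U_i$, and that is how the main text applies it to the eigenspaces of $S$.

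Your diagnosis is correct. For an arbitrary invariant decomposition the strong reading fails: take $\{U\}$ itself whenever $U$ has two nonzero isotypic components, e.g.\ in \cref{ex:1}. The correct general statement is compatibility, $V_j=\bigoplus_i(V_j\cap U_i)$. Your proof of it is sound. Since $p_i\in\spn\{\rho(h):h\in H\}$, it maps each invariant $V_j$ into $V_j\cap U_i$ and fixes $V_j\cap U_i$ pointwise, so $p_i(V_j)=V_j\cap U_i$. Then $\sum_i p_i=\mathrm{id}$ and $p_i(U)=U_i$ give both direct-sum identities. This uses only that $p_i$ is a linear combination of the $\rho(h)$. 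The commutation and class-function fact you cite at the end is needed only for $U_i$ itself to be invariant, which is already part of \cref{thm:isotypic}.

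The two routes buy different things. The paper's gives genuine containment, but only for irreducible or isotypic pieces, so it does not by itself cover the eigenspaces of $S$, which need not be irreducible. Yours covers every invariant decomposition, including eigenspaces of any operator commuting with $H$, but yields only compatibility. Getting the containment the paper claims for its channels therefore needs, as you say, a separate genericity argument. $S$ commutes with each (real) isotypic projector, because these are linear combinations of the $\rho(h)$, so $S$ preserves each component. For generic coefficients, the spectra of $S$ on distinct components are then disjoint. The bad set is a proper algebraic subset, because those projectors themselves lie in the span of the matrices $(M_t+M_t^\top)/2$.
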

\begin{proof}
For part (a), we denote the subspace of $U$ which is fixed under $H$ by $$U^H:=\{u \in U: \rho(h) u = u \quad \forall h \in H\}.$$
We will show two inclusions between $U_1$ and $U^H$. Consider $u \in U^H$. Then by \cref{eq:avgop}, $$p_{\text{avg}} (u) = \frac{1}{|H|} \sum_{h \in H} \rho(h) u = \frac{1}{|H|} \sum_{h \in H} u = u.$$
Hence, $u \in \im(p_{\text{avg}})=U_1$.

Conversely, take $u \in U_1$. Then for any $h \in H$ we have $$\rho(h) u = \rho(h) p_\text{avg} u = \frac{1}{|H|} \rho(h)  \sum_{t \in H} \rho(t) u = \frac{1}{|H|}  \sum_{t \in H} \rho(ht) u =p_\text{avg} u=u.$$
Here we used that multiplication by $h$ defines a bijection $H \to H$. We conclude that $u \in U^H$.

Part (b) follows from the uniqueness of the canonical decomposition, which is part of \cref{thm:isotypic}. It implies that we can also obtain the canonical decomposition starting from the given $H$-invariant decomposition. In other words, combining the isomorphic irreducible representations in any given $H$-invariant decomposition yields the canonical decomposition.
\end{proof}

We will now provide proofs for all the results from the main part of this paper, except for Theorems \ref{thm:maschke} and \ref{thm:isotypic}, for which we refer to \cite{serre_RT} p.~7 and p.~21 respectively. 

\subsubsection{Proof of \cref{thm:factorization}}
\begin{proof}
    We can define a map $f':U_1 \to U'$ using the fact that $U_1 \subset U$ by $f'(w):=f(w)$ for all $w \in U_1$. Now take any $u \in U$. Because $\im(p_\text{avg})=U_1$, we have $$f' \circ p_{\text{avg}}(u)=f \circ p_{\text{avg}}(u) = \frac{1}{|H|} \sum_{h \in H} f(\rho(h) u)=\frac{1}{|H|} \sum_{h \in H} f(u)=f(u),$$
    where we used \cref{eq:avgop} and the $H$-invariance of $f$. Hence, our map $f'$ satisfies indeed $f=f' \circ p_\text{avg}$.
\end{proof}

\subsubsection{Proof of Lemma \ref{lem:S_com_action}}
\begin{proof}
    We want to show that for all $h \in H$, we have $S\circ \rho(h)=\rho(h) \circ S$. By the definition of $S$ in \cref{eq:S-orbitals}, it suffices to prove that $\rho(h)$ commutes with all matrices of the form $(M_t+M_t^\top)$.
    
    Note that if $M_t$ is the matrix belonging to the orbit of some element $(i,j) \in [n] \times [n]$, then $M_t^\top$ is the indicator matrix of the orbit of $(j,i)$. Therefore it remains to show that $$M \circ \rho(h)=\rho(h) \circ M$$ holds for all $h \in H$ and orbit indicator matrices $M$. This is a known result in permutation group theory \citep[see e.g.,][p.~37]{cameron}, for which we will provide a short proof here.
    
    We compute the entry $$\bigl(M \rho(h)\bigr)_{ij}=\sum_{k=1}^n M_{ik} \bigl(\rho(h)\bigr)_{kj} = M_{ih^{-1}(j)},$$
    because $\bigl(\rho(h)\bigr)_{kj}=1$ if and only if $h(k)=j$ (and zero otherwise). Similarly,
    $$\bigl(\rho(h)M\bigr)_{ij}=\sum_{k=1}^n \bigl(\rho(h)\bigr)_{ik} M_{kj} = M_{h(i)j}.$$
    Since $h \cdot (i,h^{-1}(j))=(h(i),j)$, these pairs are in the same $H$-orbit of $[n] \times [n]$ and therefore $M_{ih^{-1}(j)}=M_{h(i)j}$.
\end{proof}

\subsubsection{Proof of Lemma \ref{lem:eigenspaces_invariant}}
\begin{proof}
    The orthogonal matrix $Q$ consists of eigenvectors of $S$. Let the columns of $Q$ be the vectors $q_1,\dots,q_n$. They form an eigenbasis of $S$, which is an orthonormal basis, because $Q$ is orthogonal. Take any vector $u$ and write it in the eigenbasis as $u=\sum_{k=1}^n \gamma_k q_k$ for some $\gamma_k \in \IR$. Then from \cref{eq:eigenspace-projector} we see that $P_\alpha u = \sum_{k \in I_\alpha} \gamma_k q_k$. Hence, $P_\alpha$ projects onto the eigenspace $E_\alpha$ belonging to $I_\alpha$: $$E_\alpha=\spn\{q_k:k\in I_\alpha\}=P_\alpha(U).$$
    The orthogonality of the eigenbasis yields orthogonality of the eigenspaces and because the eigenbasis spans $U$, $\{E_\alpha=P_\alpha(U): \alpha = 1,\dots,q\}$ yields an orthogonal decomposition of $U$.

    It remains to show the $H$-invariance of the eigenspaces $E_\alpha$. Take any $h \in H$ and $u \in E_\alpha$. Then we need to show that $\rho(h)u \in E_\alpha$. Let $\lambda$ be the eigenvalue corresponding to $E_\alpha$. Then we have by Lemma \ref{lem:S_com_action}: $$S \rho(h) u = \rho(h) Su = \lambda \rho(h) u.$$
    Hence, also $\rho(h) u$ is an $\lambda$ eigenvector of $S$ and therefore $\rho(h) u \in E_\alpha$, making this space $H$-invariant.
\end{proof}

\begin{lemma} \label{lem:irrep_decompositions}
    Consider the action of the subgroups $H^{(1)}=D_6$ and $H^{(2)}=C_2\times D_3 \times D_3$ of $S_6$. Their action on $\{1,2,3,4,5,6\}$ can be interpreted as permuting the elements of a basis of $U=\IR^6$ yielding two representations of degree 6. These representations have the following canonical decompositions:
    \begin{align*}
        U^{(1)}=&\langle(1,1,1,1,1,1)^\top \rangle \oplus \langle(1,-1,1,-1,1,-1)^\top \rangle
        \oplus \langle (1,1,0,-1,-1,0)^\top,(0,1,1,0,-1,-1)^\top \rangle\\
        &\oplus \langle (1,-1,0,1,-1,0)^\top,(0,1,-1,0,1,-1)^\top \rangle \\
        U^{(2)}=&\langle(1,1,1,1,1,1)^\top \rangle \oplus \langle(1,1,1,-1,-1,-1)^\top \rangle \\ 
        &\oplus \langle (1,-1,0,0,0,0)^\top,0,1,-1,0,0,0)^\top,(0,0,0,1,-1,0)^\top,(0,0,0,0,1,-1)^\top  \rangle.
    \end{align*}
\end{lemma}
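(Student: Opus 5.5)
To prove Lemma~\ref{lem:irrep_decompositions}, I would verify three things for each of the two permutation representations. First, each listed subspace is $H^{(i)}$-invariant. Second, each summand is a full isotypic component, i.e.\ it collects all copies of one irreducible and no others. Third, the dimensions add up to $6$. Since the spans are visibly independent (and in fact mutually orthogonal), the third point is immediate: $1+1+2+2=6$ and $1+1+4=6$. I would fix explicit generators. For $D_6$ on the hexagon $a,\dots,f$ (labels $1,\dots,6$ in cyclic order), use the rotation $r=(123456)$ and a reflection, say $s:i\mapsto 2-i \bmod 6$. For $C_2\times D_3\times D_3$, use the transpositions generating $S_3$ on $\{1,2,3\}$ and on $\{4,5,6\}$, together with the swap $\tau=(14)(25)(36)$. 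Invariance then reduces to checking the images of the spanning vectors under these few generators, which is a routine coordinate computation.

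For the irreducibility and non-isomorphism of the summands, I would use characters via Theorem~\ref{thm:characters}. For $U^{(1)}$, the two one-dimensional pieces are the trivial representation and the sign-like character $r\mapsto -1$. These are distinct irreducibles. Each two-dimensional piece is the span of $\{(1,\zeta^k,\zeta^{2k},\dots)\}$ for the sixth root $\zeta$ with $k=\pm1$ and $k=\pm 2$ respectively. Equivalently, it is the real span of the Fourier modes of frequency $1$ and $2$. Rotation acts on these pieces with traces $2\cos(\pi/3)=1$ and $2\cos(2\pi/3)=-1$. So the two characters differ and the pieces are non-isomorphic. To show each is irreducible, I would apply Theorem~\ref{thm:characters}(b) by computing $\frac{1}{12}\sum|\chi|^2$. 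As a more elementary alternative, I would note that $r$ has no real eigenvector in either plane, so no invariant line exists. With all four pieces pairwise non-isomorphic irreducibles, each is its own isotypic component, and uniqueness in Theorem~\ref{thm:isotypic} gives the claim.

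For $U^{(2)}$, the trivial line is clear. On the line $\langle(1,1,1,-1,-1,-1)^\top\rangle$, the group acts by the nontrivial character that is $-1$ on $\tau$ and $+1$ on the $D_3$ factors, so it is a distinct irreducible. The four-dimensional space $\langle u_1,\dots,u_4\rangle$ is the orthogonal complement of the first two lines, hence invariant. The main obstacle is to decide whether it is a single isotypic component. Restricted to $D_3\times D_3$, it splits as $W\otimes 1\oplus 1\otimes W$ with $W$ the standard $2$-dimensional representation, and $\tau$ swaps the two summands. Therefore it is in fact irreducible of degree $4$ for the full group. I would verify this with Theorem~\ref{thm:characters}(b). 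The character is $0$ on every element involving $\tau$, since such elements permute the two summands without fixed blocks. On $(g,g')\in D_3\times D_3$ it equals $\chi_W(g)+\chi_W(g')$. The average $\frac{1}{72}\sum|\chi|^2$ then reduces to $\frac{1}{72}\sum_{g,g'}(\chi_W(g)+\chi_W(g'))^2$. Expanding and using $\sum_g\chi_W(g)^2=6$ and $\sum_g\chi_W(g)=0$, this equals $\frac{1}{72}(6\cdot6+6\cdot6)=1$. So the four-dimensional space is irreducible, hence trivially a full isotypic component, and the decomposition is canonical.
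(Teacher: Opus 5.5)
Your argument is correct, but it takes a different route from the paper. The paper identifies the isotypic components as images of explicit character projectors. It computes $p_{\text{avg}}=\frac16\mathbf{1}^{6\times 6}$ for both groups. It gets invariance of the other summands either by direct check or because they are eigenspaces of $S^{(1)},S^{(2)}$ (Example~\ref{ex:2}). For $D_6$ it takes the degree-$2$ characters $\chi_j(r^k)=2\cos(\pi jk/3)$ from Serre and writes out $p_{\chi_1},p_{\chi_2}$ as $6\times 6$ matrices whose images are the two planes.

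You instead prove that each summand is irreducible, using the norm criterion of Theorem~\ref{thm:characters}(b). You show the summands are pairwise non-isomorphic via degrees, the value on $\tau$, and the trace of $r$. Uniqueness of the canonical decomposition then finishes the argument. The paper's route is constructive and fits its projector viewpoint, but it needs the character table and explicit matrix products; yours needs only a few traces.

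The more substantive difference is at $H^{(2)}$. There the paper only computes $p^{(2)}_{\text{avg}}$ and says the rest follows ``with a similar argumentation''. It never shows that $\langle u_1,\dots,u_4\rangle$ is a single isotypic component. Your computation $\frac1{72}\sum_{g,g'}(\chi_W(g)+\chi_W(g'))^2=1$ supplies exactly that missing step. Your generators also correctly produce the order-$72$ group $S_3\wr C_2$ that is actually meant, not a literal direct product.

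One caveat: your ``elementary alternative'' for $D_6$ (no real eigenvector of $r$) only gives irreducibility over $\IR$. Over $\IC$, where the paper carries out this proof, $r$ has eigenlines in each plane and only the reflections rule them out. So either keep the norm criterion, or remark that all irreducibles of $D_6$ are of real type.
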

\begin{proof}
In this proof, we will work with $U=\IC^6$ instead; by the discussion in Appendix \ref{app:realvscomplex} the results carry over to the real part.

We start with $H^{(1)}$ and first consider the trivial irreducible representation. The projection on the trivial component if given by \cref{eq:avgop}. For $H^{(1)}$ this projection reads
$$p^{(1)}_\text{avg}=\frac 1 {12} \left(\sum_{h \, \in \, \text{rotations}} \rho(h) + \sum_{h \, \in \, \text{reflection and rotations}} \rho(h) \right)=\frac{1}{12}(\mathbf{1}^{6 \times 6}+\mathbf{1}^{6 \times 6})=\frac 1 6 \mathbf{1}^{6 \times 6}.$$
Here $\mathbf{1}^{6 \times 6}$ denotes the $6$ by $6$ matrix with all entries 1. With rotations we mean the elements permuting basis vector indices according to $1 \mapsto 2 \mapsto 3 \mapsto 4 \mapsto 5 \mapsto 6 \mapsto 1$ and iterations thereof.\\
Hence, $p^{(1)}_\text{avg}$ is indeed the projection onto $\mathrm{im}(p^{(1)}_\text{avg}) =\langle(1,1,1,1,1,1)^\top\rangle$.

The remaining three spaces in the proposed canonical decomposition can be verified to also be invariant under the action of $D_6$. Alternatively, Lemma \ref{lem:eigenspaces_invariant} together with the fact that these spaces are the eigenspaces of $S^{(1)}$ in Example \ref{ex:2} also show the invariance of these spaces.

Hence it only remains to check that they belong to different irreducible representations. Note that $\langle(1,-1,1,-1,1,-1)^\top \rangle$ must belong to its own irreducible representation, since it is an irreducible representation of degree 1 that is not the trivial one.

\citet[p.~37]{serre_RT} computes the characters of the two degree 2 irreducible representations of $D_6$. They are given by $\chi_j(r^k)=2 \cos\left(\frac{\pi jk}{3}\right)$, where $r$ denotes the rotation corresponding to $1 \mapsto 2 \mapsto 3 \mapsto 4 \mapsto 5 \mapsto 6 \mapsto 1$, and $\chi_j=0$ for compositions of reflection and rotation for $j \in \{1,2\}$. By \cref{eq:character_proj}, the projections onto the corresponding component are given by
$$p_{\chi_1}=\frac 1 3 \begin{pNiceMatrix}
    1& \frac 1 2 & -\frac 1 2 & -1 & -\frac 1 2 & \frac 1 2 \\
    \frac 1 2 & 1& \frac 1 2 & -\frac 1 2 & -1 & -\frac 1 2 \\
    -\frac 1 2 & \frac 1 2& 1& \frac 1 2 & -\frac 1 2 & -1 \\
    -1 & -\frac 1 2 & \frac 1 2 & 1& \frac 1 2 & -\frac 1 2 \\
    -\frac 1 2 & -1 & -\frac 1 2 & \frac 1 2& 1& \frac 1 2   \\
    \frac 1 2 & -\frac 1 2 & -1 & -\frac 1 2 & \frac 1 2& 1  \\
\end{pNiceMatrix} \quad \text{and} \quad p_{\chi_2}=\frac 1 3 \begin{pNiceMatrix}
    1& -\frac 1 2 & -\frac 1 2 & 1 & -\frac 1 2 & -\frac 1 2 \\
    -\frac 1 2 & 1& -\frac 1 2 & -\frac 1 2 & 1 & -\frac 1 2 \\
    -\frac 1 2 & -\frac 1 2& 1& -\frac 1 2 & -\frac 1 2 & 1 \\
    1 & -\frac 1 2 & -\frac 1 2 & 1& -\frac 1 2 & -\frac 1 2 \\
    -\frac 1 2 & 1 & -\frac 1 2 & -\frac 1 2& 1& -\frac 1 2   \\
    -\frac 1 2 & -\frac 1 2 & 1 & -\frac 1 2 & -\frac 1 2& 1  \\
\end{pNiceMatrix}.$$
These project precisely onto $\langle (1,1,0,-1,-1,0)^\top,(0,1,1,0,-1,-1)^\top \rangle$ and $ \langle (1,-1,0,1,-1,0)^\top,(0,1,-1,0,1,-1)^\top \rangle$ respectively, concluding our proof for $H^{(1)}$.

With a similar argumentation, for $H^{(2)}$ it only remains to check that the two one-dimensional irreducible representations are not isomorphic. We will do that again by showing that $\mathrm{im}(p^{(2)}_\text{avg}) =\langle(1,1,1,1,1,1)^\top\rangle$. Every element of $H^{(2)}$ can be written as applying any element from $D_3=S_3$ to the vertices 1,2 and 3 in \cref{fig:ex1}, applying any element from $S_3$ to 4,5 and 6 and then either exchanging $1,2,3$ with $4,5,6$ or not. Hence we get
\begin{align*}
    p^{(2)}_\text{avg} &= \frac 1 {72} \left(\left(\sum_{h \in \text{permutations of } 1,2,3 } \rho(h) \right) \left(  \sum_{h \in \text{permutations of } 4,5,6 } \rho(h) \right) \bigl(\rho(\text{id})+\rho((1,2,3)\leftrightarrow (4,5,6) )\bigr) \right)\\
    &=\begin{pNiceMatrix}
        2&2&2&0&0&0\\
        2&2&2&0&0&0\\
        2&2&2&0&0&0\\
        0&0&0&6&0&0\\
        0&0&0&0&6&0\\
        0&0&0&0&0&6\\
    \end{pNiceMatrix} \begin{pNiceMatrix}
        6&0&0&0&0&0\\
        0&6&0&0&0&0\\
        0&0&6&0&0&0\\
        0&0&0&2&2&2\\
        0&0&0&2&2&2\\
        0&0&0&2&2&2\\
    \end{pNiceMatrix} \begin{pNiceMatrix}
        1&0&0&1&0&0\\
        0&1&0&0&1&0\\
        0&0&1&0&0&1\\
        1&0&0&1&0&0\\
        0&1&0&0&1&0\\
        0&0&1&0&0&1\\
    \end{pNiceMatrix}=\frac 1 6 \mathbf{1}^{6 \times 6}.
\end{align*}
\end{proof}

\begin{lemma} \label{lem:welldef}
    $R_{\mathrm{iso}}(G,M)$ does not change under node relabeling.
\end{lemma}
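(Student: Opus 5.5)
The plan is to model a relabeling as a permutation $\pi$ of $[n]$ with permutation matrix $\Pi$. The relabeled graph $G'$ then has adjacency $A(G')=\Pi A(G)\Pi^\top$ and Laplacian $L(G')=\Pi L(G)\Pi^\top$, and the equivariant encoder delivers embeddings $M'=\Pi M$. I will show that every intermediate object in the construction of $R_{\mathrm{iso}}$ either transforms by conjugation with $\Pi$ or is left unchanged, and then check that the final statistics do not see $\Pi$.

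\textbf{Step 1: automorphisms and orbits.} First, $\Aut(G')=\pi\,\Aut(G)\,\pi^{-1}$. Hence the map $(i,j)\mapsto(\pi(i),\pi(j))$ is a bijection from the pair-orbits $\mathcal O_t$ of $G$ to those of $G'$. The indicator matrices therefore satisfy $M'_t=\Pi M_t\Pi^\top$.

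\textbf{Step 2: the operator $S$.} Here I must use that the coefficients $c_t$ are attached to orbits in a labeling-independent way. Concretely, the deterministic seeded sampling should index orbits by an invariant description, for example the sorted pair (orbit size, whether the orbit is diagonal, adjacency status) with invariant tie-breaks, and not by the position of the orbit in a list. Under that reading $c'_t=c_t$ for corresponding orbits, so $S'=\Pi S\Pi^\top$.

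\textbf{Step 3: eigenspaces and projectors.} Conjugating by $\Pi$ keeps the eigenvalues and maps the eigenspaces as $E'_\lambda=\Pi E_\lambda$. The orthogonal projector onto an eigenspace does not depend on the chosen orthonormal basis $Q$, so $P'_\alpha=\Pi P_\alpha\Pi^\top$, with blocks matched by their common eigenvalue. Because the trace is cyclic, the sorting keys agree:
\[
\tr(P'_\alpha)=\tr(P_\alpha),\qquad \tr\bigl(P'_\alpha L(G')\bigr)=\tr\bigl(P_\alpha L(G)\bigr),\qquad \tr\bigl(P'_\alpha A(G')\bigr)=\tr\bigl(P_\alpha A(G)\bigr).
\]
So the sorted, truncated and zero-padded lists of projectors correspond term by term.

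\textbf{Step 4: projected embeddings and statistics.} The projected embeddings are $M'_\alpha=\Pi P_\alpha\Pi^\top\Pi M=\Pi M_\alpha$, which is just a row permutation of $M_\alpha$. Each statistic is unchanged by such a row permutation:
\begin{itemize}[noitemsep]
\item $s_1$ uses only the sum over rows, which does not depend on row order;
\item $s_2$ is the Frobenius norm, which is permutation-invariant;
\item $s_3$ averages the multiset of row norms;
\item $\mu(\alpha)$ is a row mean, and $R$ is a fixed matrix.
\end{itemize}
Hence $\psi_\alpha(G',M')=\psi_\alpha(G,M)$ for every $\alpha$, and the concatenations $R_{\mathrm{iso}}$ coincide.

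\textbf{Main obstacle: ties in the sort.} The delicate step is sorting when two blocks have identical keys $\bigl(\tr P,\tr PL,\tr PA\bigr)$. In that case the order, and hence the concatenation, could depend on how the eigensolver lists eigenvalues. I would resolve this by adding the eigenvalue $\lambda_\alpha$ as a final tie-break key; it is invariant by Step 3 and distinct across blocks by definition. Together with the labeling-independent choice of the $c_t$ in Step 2, this makes the ordering, and thus $R_{\mathrm{iso}}$, well defined.
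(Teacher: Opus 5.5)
Your proposal is correct and follows the paper's proof step for step. In both, the orbit indicators and hence $S$ are conjugated by $\Pi$, the eigenspace projectors are conjugated accordingly, the sort keys are similarity-invariant traces, $P_\alpha M$ becomes the row permutation $\Pi M_\alpha$, and every block statistic is row-permutation invariant. The paper's proof passes silently over the two caveats you add. When it asserts $S(G^\Pi)=\Pi S(G)\Pi^\top$, it needs the coefficients $c_t$ to be attached to orbits in a labeling-independent way. This is genuinely needed: with index-based coefficients, the eigenspaces inside an isotypic component of multiplicity greater than one can change. When it equates the sorted projector lists, it needs ties in the sort keys broken invariantly, for instance by the eigenvalue as you propose. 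Your version is therefore the more careful one.
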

\begin{proof}
    Let $\Pi \in \IR^{n \times n}$ be the matrix representation of a node relabeling, so the matrix belonging to a permutation $\pi$ of vertex indices: if $\Pi_{ij}=1$ this means that the previous vertex $j$ is now called $i$. Note that we have $\Pi^\top = \Pi^{-1}$.\\
    Let $G^\Pi$ be the relabeled graph. Then the adjacency matrix transforms as $A(G^\Pi)=\Pi A(G)\Pi^\top$ and node embedding transforms as $M^\Pi=\Pi M$.

    Understanding elements of $\mathrm{Aut}(G)$ as permutation matrices too, we have $\mathrm{Aut}(G^\Pi)=\Pi\,\mathrm{Aut}(G)\,\Pi^\top$. Therefore, $H$-orbits under the induced action on pairs are conjugated by $\Pi$ and consequently, $M_t(G^\Pi)=\Pi M_t(G)\Pi^\top$ for each orbit indicator matrix. Hence, $S(G^\Pi)=\Pi S(G)\Pi^\top$.
    Thus, we obtain the eigendecomposition $S(G^\Pi)=\Pi Q \Lambda Q^\top \Pi^\top$ with $Q$ and $\Lambda$ taken from the eigendecomposition of $S$. Since the three sorting criteria do not change (traces of similar matrices are equal), we obtain
    $\{P_\alpha(G^\Pi)\}_{\alpha=1}^B = \{\Pi P_\alpha(G)\Pi^\top\}_{\alpha=1}^B$.

    Therefore, we get $M_\alpha^\Pi=P_\alpha^{\Pi} M^\Pi=\Pi M_\alpha$, i.e., the rows are permuted. All three summaries $s_j$ and also the column average $\mu$ involve summation over all row indices and are therefore invariant under row permutations. Because the matrix $R$ is fix, this implies $\Psi_\alpha(G^\Pi,M^\Pi)=\Psi_\alpha(G,M)$ and consequently, $$R_\text{iso}(G^\Pi,M^\Pi) = R_\text{iso}(G,M).$$
\end{proof}

\section{Method: Centering}
\label{app:centering}
In this section of the Appendix we describe a preprocessing step that can optionally be included at the beginning of our readout, and replaces the matrix $M$ by a matrix $M_c$ whose columns sum to zero.

The analysis in \cref{sec:bottleneck} shows that linear, fully permutation-invariant readouts can access only the column-averaged components of the node embedding matrix $M\in\IR^{n\times d}$, which are fixed under row permutations. Let $\mathbf{1}^{n \times n}$ be the $n \times n$ matrix with all-1 entries. Then sum and mean pooling can only see
\[
M_{\mathrm{avg}}:=\frac{1}{n} \mathbf{1}^{n \times n} M.
\]
This observation motivates separating the constant (fully invariant) part of the node embeddings from the variation across nodes. To do this, we define the centering operator
\begin{equation*}
J \ :=\ I_n - \frac{1}{n}\mathbf{1}^{n \times n},
\quad
M_c \ :=\ JM.
\end{equation*}
Then every node embedding matrix admits the decomposition $M=M_{\mathrm{avg}}\ +\ M_c$ into an average part and a centered part with column average zero. $M_c$ has zero column-wise mean and captures exactly the node-to-node variation that is removed by global averaging.

Centering therefore isolates the components of the representation that are invisible to standard sum or mean pooling, but potentially informative for downstream tasks.  For this reason, we allow centering as an optional preprocessing step in our readout architecture.  When enabled, we apply the block projectors to the centered embeddings $M_\alpha:=P_\alpha J M$ instead of $M_\alpha=P_\alpha M$.  In practice, centering can improve numerical stability and encourages the readout to focus on relative differences between node representations rather than global offsets. Importantly, centering preserves permutation equivariance and does not introduce any additional learnable parameters.

\section{Additional Experiments: Implementation, Results, and Ablations}
\label{app:reproducibility}
This part of the appendix provides more details about the performed experiments. We start by explaining the concrete implementation, before we present additional numerical results and some ablation studies.

\subsection{Implementation Details}

Here, we provide full implementation details and computing infrastructure used for our experiments.

\paragraph{Software and Determinism.}  All experiments use Python (3.10+), PyTorch (\texttt{torch>=2.3}) and PyTorch Geometric (\texttt{torch-geometric>=2.5}), with \texttt{ogb}, \texttt{networkx}, \texttt{scipy}, and \texttt{scikit-learn}. Training uses single precision (float32) throughout (no AMP). We control randomness with a single run seed that initializes Python, NumPy, and PyTorch (CPU and CUDA), and we run cuDNN in deterministic mode (\texttt{cudnn.deterministic=True}, \texttt{cudnn.benchmark=False}). Data shuffling uses PyTorch RNG under the same seed (default \texttt{num\_workers=0}).

\subsubsection{Architectures}
All models follow a shared scaffold: a message-passing \emph{encoder} producing node embeddings in $\mathbb{R}^{n\times d}$, a \emph{readout} mapping them to a graph embedding in $\mathbb{R}^{d_r}$, and a 2-layer MLP head. The head performs a linear map $\IR^{d_r}\!\to\! \IR^{d_r}$, followed by ReLU, dropout (same probability as the encoder), and another linear map $\IR^{d_r}\!\to\!$ task output.\\ Unless stated otherwise below, we apply BatchNorm after each message-passing layer. The activations we use are ReLU (and ELU for GATv2).

\paragraph{Encoders.}  As encoders we use standard PyG implementations: GIN, PNA, GraphSAGE, GATv2, and a TransformerConv-based graph transformer. For PNA, we compute the degree histogram on the \emph{training split only} and pass it to PNAConv (aggregators \{\texttt{mean,min,max,std}\}, scalers \{\texttt{identity,amplification,attenuation}\}, and \texttt{towers=pre\_layers=post\_layers=1}). The graph transformer uses 6 TransformerConv layers with \texttt{beta=True}. Here are the concrete encoder settings used for the different tasks:
\begin{itemize}[noitemsep]
    \item \textbf{Expressivity tasks (\cref{subsec:trainfree_sep,subsec:srg16}):}
GIN: $d{=}64$, 5 layers, dropout 0.0;\\
PNA/GraphSAGE/GATv2: $d{=}96$, 5 layers, dropout 0.0 (GATv2 uses 4 heads);\\
graph transformer: $d{=}128$, 6 layers, 8 heads, dropout 0.1.
\item \textbf{Downstream tasks (\cref{subsec:downstream}):}
GIN/PNA/GINE: $d{=}128$, 5 layers, dropout 0.1 (except the readout-swap experiment below, which uses dropout 0.0).\\
For MolHIV, GINE uses the OGB atom/bond encoders; otherwise it falls back to linear encoders over float features.
\end{itemize}

\paragraph{Classical Readouts.} Baselines are global sum, mean, max, mean+max concatenation, PNA-style statistics pooling (mean/sum/max/std concatenation), global attention pooling (a 2-layer gate MLP with hidden size 128 and scalar output), and Set2Set with 3 processing steps. We also use \emph{combo} readouts that fuse multiple readouts either by concatenation, by summation after projection to a shared fusion dimension, or by a learned softmax gate per graph. For ``gated'' fusion the fusion dimension, unless otherwise specified, defaults to $d$.

\paragraph{Isotypic Readouts.}
Our isotypic readout computes invariant block summaries from node embeddings using a set of orthogonal projectors $\{P_\alpha\}$ (``blocks''). Given a node feature matrix $M$, for each block, we form $P_\alpha M$ (optionally after centering) and produce features consisting of:
\begin{enumerate}[noitemsep]
    \item three norms (of the summed vector, Frobenius norm, and mean row norm), and
    \item a fixed random projection of the block mean.
\end{enumerate}
The random projection matrix $R$ has shape $d\times r$ with $r{=}8$ and is sampled once at initialization (and is therefore deterministic given the run seed). The final graph embedding concatenates features from at most \texttt{max\_blocks} blocks (padding with zeros when fewer exist).

Projectors are constructed by \emph{orbital blocks} obtained by enumerating automorphisms via NetworkX GraphMatcher up to \texttt{cap\_auts}=50{,}000, constructing orbit indicator matrices, forming a random symmetric combination, and taking eigenspace projectors, using tolerance $10^{-12}$ to tell eigenvalues apart. 

\paragraph{Optimization and Hyperparameters.}  All supervised trainings use Adam (no scheduler, no gradient clipping) and early stopping on the validation metric. The best checkpoint (by validation score) is restored before testing.
Optimization happens with respect to the following loss functions:
\begin{itemize}[noitemsep]
    \item for multi-class graph classification: cross-entropy / accuracy;
    \item for MolHIV: \texttt{BCEWithLogitsLoss} / ROC-AUC (masking OGB's missing labels);
    \item for regression: MSE / MAE (we also log $R^2$).
\end{itemize}
Table~\ref{tab:opt-hparams} lists the exact optimization settings used.

\begin{table}[t]
\centering
\small
\setlength{\tabcolsep}{5pt}
\begin{tabular}{lccccc}
\hline
\textbf{Setting} & \textbf{Batch} & \textbf{Epochs} & \textbf{Learning Rate} & \textbf{Weight Decay} & \textbf{Patience} \\
\hline
SRG16 (WL-hard)  & 64  & 300 & $10^{-3}$ & $10^{-5}$ & 40 \\
BREC RPC-lite    & 64  & 200 & $10^{-3}$ & $10^{-5}$ & 30 \\
ZINC (PyG subset) & 128 & 300 & $10^{-3}$ & $10^{-5}$ & 40 \\
MolHIV (OGB split) & 256 & 200 & $10^{-3}$ & $10^{-5}$ & 30 \\
Spectral (ENZYMES/PROTEINS) & 32 & 300 & $10^{-3}$ & $10^{-5}$ & 40 \\
\hline
\end{tabular}
\caption{Optimization Hyperparameters. }
\label{tab:opt-hparams}
\end{table}

\subsubsection{Dataset Protocols}
We now provide details of the datasets used in the different experiments.

\paragraph{Training-Free Separation.}  We evaluate 36 WL-hard graph pairs. For each seed and each pair of encoder and readout, we initialize the model once and compute graph embeddings on a random node permutation of each graph. We record cosine similarity per pair and count a pair as separated if:
\begin{enumerate}[noitemsep]
    \item the mean similarity across seeds is below the set threshold, and
    \item a one-sided one-sample $t$-test against 1.0 is significant after Holm--Bonferroni correction at $\alpha=0.05$.
\end{enumerate}

\paragraph{WL-Hard Supervised.}  For SRG-16 we load the provided graph6 collection of Shrikhande and rook graphs, sample the configured number of graphs per class (5{,}000 each), apply consistent node permutations, and split 80/10/10 with the run seed. Node features are constant ones.

\paragraph{BREC RPC-lite.}  For each of the 400 BREC pairs, we build three disjoint datasets of permuted copies: 64 permutations per base graph for training, 32 for validation, and 128 for testing.  Permutations are generated with a seeded \texttt{torch.Generator}.  Each permuted graph stores both the permutation and its inverse; for the isotypic readout, this enables computing orbit projectors once in a fixed base labeling and reusing them across relabelings.  A pair is counted as solved if the test accuracy exceeds $0.95$; for reporting ``solved'', we require this threshold to be met for all seeds in \{0,1,2\}.

\paragraph{Downstream Benchmarks.}  ZINC is loaded via PyG's \emph{subset} split, using the provided train/val/test partitions; MolHIV uses the official OGB split. 
On ENZYMES and PROTEINS, the target is the vector of the $k=8$ smallest nonzero eigenvalues of the normalized Laplacian. We compute eigenvalues with \texttt{scipy.sparse.linalg.eigsh}, and split graphs 80/10/10 with the run seed.

\subsubsection{Hardware and SLURM Settings}
All runs were executed on a SLURM-managed GPU cluster. Each job requests a single GPU (\texttt{--gres=gpu:1}) on the \texttt{t4} partition, with 2--6 CPU cores and 8--32\,GB RAM depending on the experiment. This is sufficient to reproduce all reported results with the stated seeds and hyperparameters. All experiments reported in the paper used a single NVIDIA Tesla T4 GPU (16\,GB VRAM) per job (Slurm partition \texttt{t4}, \texttt{--gres=gpu:1}).

\subsection{Runtime and Scalability}
\label{app:runtime}

The isotypic readout replaces standard global pooling by constructing graph-dependent projectors $\{P_\alpha\}$ from orbit structure and computing block-wise invariant statistics. A key practical distinction from sum and mean pooling is that the cost is not governed by graph size alone. On graphs with rich symmetry, the orbit routines can enter a substantially heavier regime. 

\subsubsection{End-to-End Complexity (Encoder and Readout)} 
We start by giving a theoretical estimate of the total time complexity. Let $G=(V,E)$ be a graph with $n=|V|$, $m=|E|$, hidden width $d$, and $L$ message-passing layers. A standard sparse message-passing encoder costs
\[
T_{\text{enc}}(G) \;=\; O\!\left(L \cdot m \cdot d\right)
\quad\text{(up to architecture-dependent constants).}
\]
Our readout computes a graph-dependent projector construction followed by block-wise invariant statistics. In the implementation used in this paper, projector construction proceeds by:
\begin{enumerate}[noitemsep]
    \item sampling up to $A$ automorphisms (\texttt{cap\_auts}$=A$),
    \item assembling a dense equivariant operator $S\in\IR^{n\times n}$ from orbit indicators, and
    \item computing an eigendecomposition of $S$ to obtain the block projectors.
\end{enumerate}
This yields the per-graph bound
\[
T_{\text{read}}(G)
\;=\;
T_{\\Aut}(G;A)
\;+\;
O(A\,n^2)
\;+\;
O(n^3)
\;+\;
O(n^2 d),
\]
where $T_{\Aut}(G;A)$ is the time spent in the orbit routine under the sampling cap, $O(A n^2)$ is orbit-based operator assembly, $O(n^3)$ is a dense eigendecomposition upper bound for the current implementation, and $O(n^2 d)$ accounts for applying projectors (equivalently, working through the eigenbasis) and computing block-wise statistics.
Which of these terms \emph{dominates} depends on symmetry: on benign graphs the runtime is governed by the eigendecomposition, while on symmetry-heavy graphs, computing automorphisms and orbits can dominate and produce heavy-tail behavior.

\subsubsection{Runtime Experiments}

\paragraph{Controlled Scaling on Random Graphs (Erd\H{o}s--R\'enyi).}
\label{app:runtime_er}
To obtain a benign baseline where automorphisms are typically trivial, we measure runtime on Erd\H{o}s--R\'enyi (ER) graphs $G(n,p)$. These are random graphs on $n$ points where each edge is present with probability $p$. We use $p=0.1$ and $n\in\{16,32,48,64,80,96,112,128,160,192,224\}$. For each $n$ we sample 50 graphs and report median and 90th-percentile readout time.
\begin{figure}[h]
\centering
\includegraphics[width=0.72\linewidth]{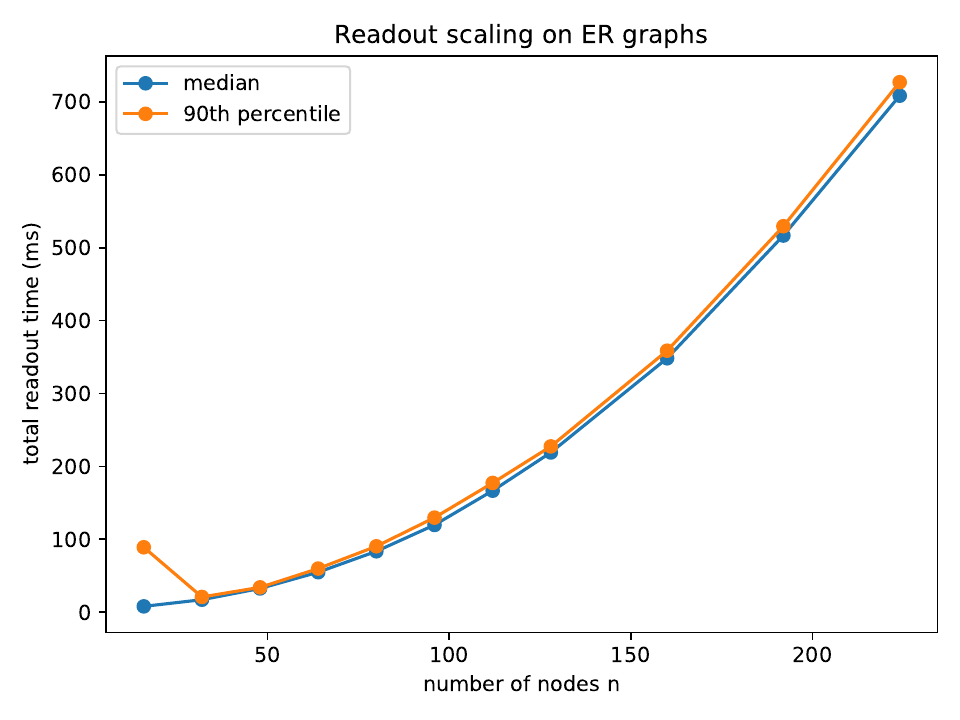}
\caption{Readout scaling on ER graphs ($p=0.1$): Total readout time vs.\ number of nodes (median and 90th percentile over 50 graphs).}
\label{fig:runtime_er_scaling}
\end{figure}

We observe that runtime increases continuously with $n$ and ``explosive'' behavior is rare, because median and 90th percentile are similar, see \cref{fig:runtime_er_scaling}. This provides a baseline for datasets where symmetry is not adversarially constructed: the overhead is predictable and dominated by the eigendecomposition stage rather than by orbit enumeration.

\paragraph{Readout Runtimes for BREC.}
In \cref{app:runtime_brec} we evaluate our readout for the official BREC dataset and keep track of runtimes. Graphs used in BREC are usually highly symmetric, so have huge symmetry groups. BREC-style benchmarks are organized as \emph{pairs} of non-isomorphic graphs.
Accordingly, we report runtimes both \emph{per graph} and \emph{per pair} (two graphs, one-shot readout, no training); the latter is the relevant unit for explaining wall-clock feasibility.

We record per-graph: 
\begin{enumerate}[noitemsep]
    \item projector construction time,
    \item block-wise feature extraction time given node embeddings, and
    \item total readout time (sum of the two).
\end{enumerate}
These timings isolate \emph{readout overhead} and do \emph{not} include message passing, task heads, or backpropagation. To avoid unbounded enumeration, automorphism sampling is capped at \texttt{cap\_auts}$=1000$; we additionally report the fraction of graphs/pairs that reach this cap as a proxy for difficult symmetry structure.
\subsection{Further Experimental Results}
\label{appsub:results}
We report full tabulated results mentioned in the main text in Section \ref{sec:experiments}.

\begin{table}[h]
\centering
\caption{Training-free separation suite (36 WL-equivalent instances). A pair is \emph{separated} if the mean cosine similarity is below $0.95$.
We report mean/max cosine similarity across the 36 instances (lower is better).}
\label{tab:trainfree_main}
\small
\begin{tabular}{llccc}
\toprule
\textbf{Encoder} & \textbf{Readout} & \textbf{Separated} & \textbf{Mean cos} & \textbf{Max cos} \\
\midrule
GIN       & Baseline       & 0/36   & 1.000 & 1.000 \\
GraphSAGE & Baseline       & 0/36   & 1.000 & 1.000 \\
PNA       & Baseline       & 0/36   & 1.000 & 1.000 \\
\midrule
GIN       & isotypic  & 33/36  & 0.081 & 0.973 \\
GraphSAGE & isotypic  & 33/36  & 0.106 & 1.000 \\
PNA       & isotypic  & 33/36  & 0.067 & 1.000 \\
\bottomrule
\end{tabular}
\end{table}

\begin{table}[H]
\centering
\caption{Family breakdown (GIN) for the training-free separation suite.
We report the mean cosine similarity between the two graph embeddings (lower is better).}
\label{tab:tf_family_gin}
\small
\setlength{\tabcolsep}{3.5pt}
\begin{tabular}{lcrr}
\toprule
\textbf{Family} & \textbf{\#} & \textbf{Baseline} & \textbf{Isotypic} \\
\midrule
Cycles ($2C_k$ vs.\ $C_{2k}$) & 24 & 1.000 & 0.000 \\
CFI-$K_3$                    & 3  & 1.000 & 0.000 \\
CFI-$K_4$                    & 3  & 1.000 & 0.000 \\
Helical ribbons              & 3  & 1.000 & 0.973 \\
GM--Petersen                 & 3  & 1.000 & 0.013 \\
\bottomrule
\end{tabular}
\vspace{-2mm}
\end{table}

\begin{table}[H]
\centering
\caption{BREC RPC-lite average test accuracy (GIN). Chance is $0.5$.}
\label{tab:brec_rpclite_main}
\small
\begin{tabular}{lcc}
\toprule
\textbf{Split} & \textbf{Multiset pooling} & \textbf{Isotypic readout} \\
\midrule
BREC dataset     & 0.50 & 0.901 \\
\bottomrule
\end{tabular}
\end{table}

\begin{table}[H]
\centering
\caption{SRG-16 test accuracy across encoders. All multiset-based baselines
(\texttt{sum/mean/meanmax/Set2Set/attn/pna}) achieve chance-level performance ($\approx 0.45$),
while the isotypic readout reaches $1.00$ across all encoders.}
\label{tab:srg16_main}
\small
\begin{tabular}{lcc}
\toprule
\textbf{Encoder} & \textbf{Baseline readouts} & \textbf{Isotypic readout} \\
\midrule
GIN               & 0.45 & 1.00 \\
GraphSAGE         & 0.45 & 1.00 \\
GATv2             & 0.45 & 1.00 \\
PNA               & 0.45 & 1.00 \\
Graph Transformer & 0.45 & 1.00 \\
\bottomrule
\end{tabular}
\end{table}


\subsection{Official BREC Evaluation (Partial)}
\label{subsec:brec_official_partial}

To connect our findings to the standard BREC evaluation protocol, we ran the official pipeline (which trains a separate model per pair using multiple relabelings and applies the benchmark's fixed decision rule). This experiment is included as a protocol-compatibility check rather than as a full benchmark submission.

\paragraph{Coverage and Results.}
The official BREC evaluation is expensive for two reasons. First, it trains a \emph{separate} model for each pair, so the total cost scales linearly with the number of pairs. Second, and more importantly for our setting, several BREC families contain graphs with very large automorphism groups. On such instances, the underlying symmetry can be enormous, which makes any computation that depends on graph symmetries---including the construction used by our readout---substantially more expensive.
In practice, these highly symmetric pairs dominate wall-clock time and memory, and make running the full 400 pairs prohibitive under a fixed budget. We therefore evaluated only the first 260 pairs (indices 0--259 in the official ordering) and obtained \textbf{56/260} distinguished pairs. A breakdown over the categories solved within this subset is shown in Table~\ref{tab:brec_official_partial}.

\begin{table}[h]
\centering
\caption{Official BREC results on the first 260 pairs (0--259).}
\label{tab:brec_official_partial}
\small
\begin{tabular}{lcc}
\toprule
\textbf{Category} & \textbf{GIN+isotypic} & \textbf{GIN+sum} \\
\midrule
Basic     & 2/60  & 0/60 \\
Regular   & 48/100 & 0/100 \\
Extension & 6/100  & 0/100 \\
\midrule
\textbf{Total} & \textbf{56} & \textbf{260} \\
\bottomrule
\end{tabular}
\end{table}

\paragraph{Interpretation.}
For reference, the original BREC study evaluates 23 beyond-1-WL architectures and reports a best overall distinguishability of $70.2\%$ on the full 400-pair benchmark (achieved by I2-GNN \cite{huang2022boosting}), indicating that BREC remains far from being saturated. However, our goal in this section is not to compete with specialized architectures, but to isolate whether a \emph{readout change alone} can recover distinguishability when the encoder is held fixed. The official benchmark is a regime where standard message-passing baselines with multiset pooling are extremely weak, so improving distinguishability through a \emph{readout swap alone} is already meaningful evidence that the readout is a bottleneck in practice.

We emphasize that our main paired-comparison evidence remains RPC-lite, which directly measures generalization across held-out relabelings and allows controlled sweeps under a fixed budget. 

\subsubsection{Runtimes for Official BREC Evaluation}
\label{app:runtime_brec}

BREC is explicitly designed to contain symmetry-heavy non-isomorphic pairs, making it a natural stress test for our automorphism-based readout. We report runtimes for a BREC subset of 193 pairs (386 graphs).

\paragraph{Results.}  Although the \emph{median} per-graph readout time is small, the distribution is extremely heavy-tailed, see \cref{tab:runtime_brec_summary} and \cref{fig:runtime_brec_tail} (left).
Crucially, this tail is not explained by node count alone: graphs of similar size can differ by orders of magnitude in runtime (see the scatter plot of \cref{fig:runtime_brec_tail}), verifying that automorphism structure dominates the runtime for highly symmetric graphs.

\begin{figure}[h]
\centering
\begin{minipage}{0.49\linewidth}
  \centering
  \includegraphics[width=\linewidth]{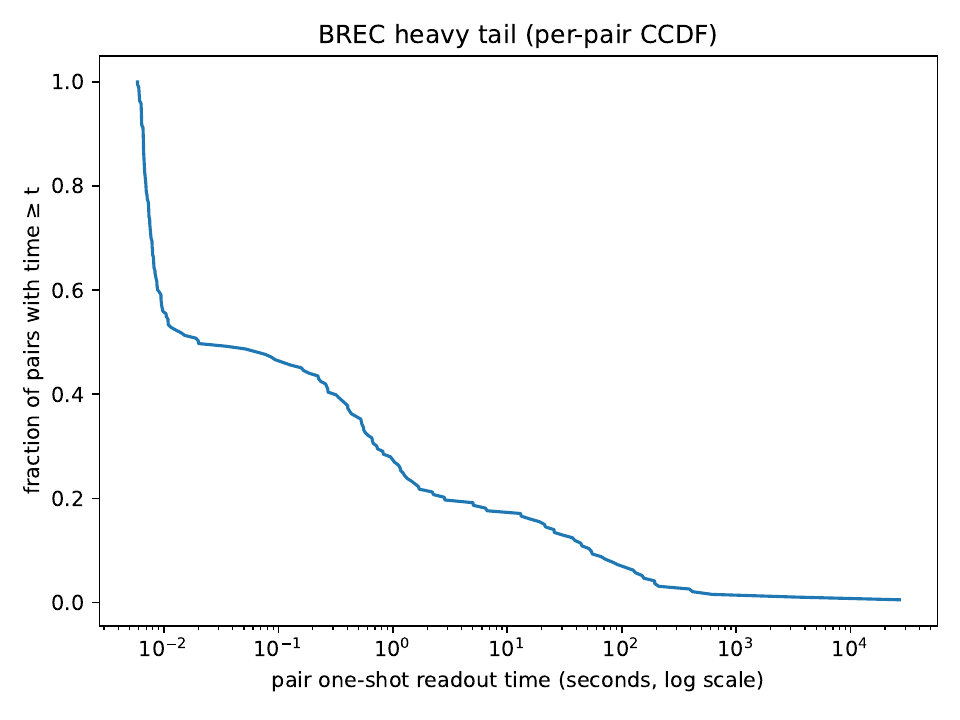}
\end{minipage}\hfill
\begin{minipage}{0.49\linewidth}
  \centering
  \includegraphics[width=\linewidth]{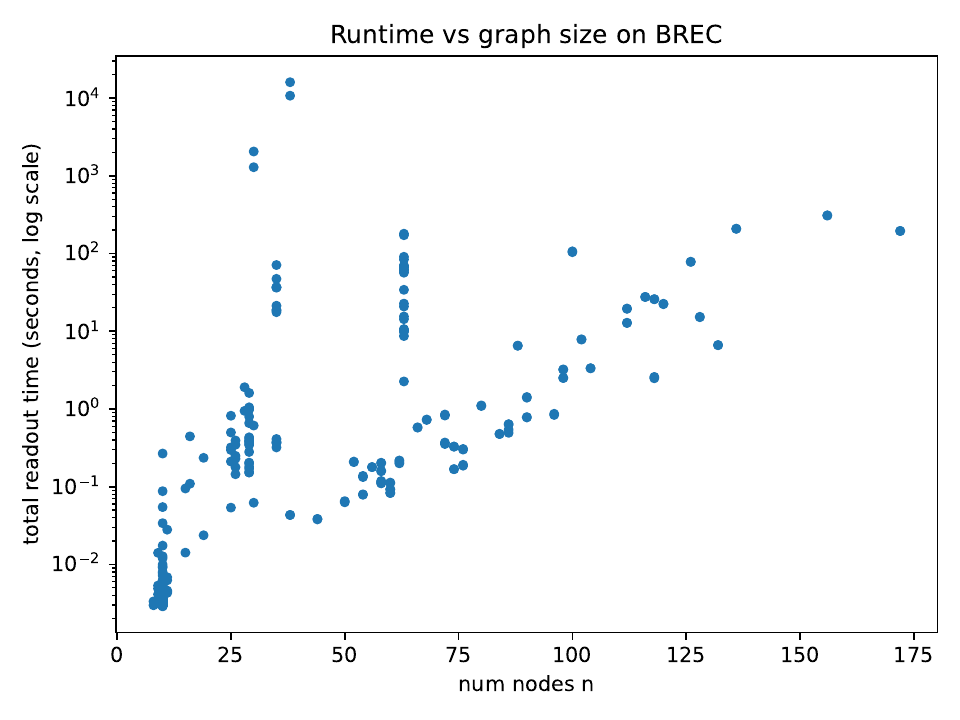}
\end{minipage}
\caption{\textbf{Left:} Complementary Cumulative Distribution Function (CCDF) of per-pair readout time on the 193 measured BREC pairs.
\textbf{Right:} Scatter of node count $n$ vs.\ readout time.}
\label{fig:runtime_brec_tail}
\end{figure}

\begin{table}[H]
\centering
\caption{BREC readout heavy-tail summary (193 pairs / 386 graphs). Timings are readout-only.}
\label{tab:runtime_brec_summary}
\small
\begin{tabular}{lcccc}
\toprule
 & \textbf{Median} & \textbf{$p90$} & \textbf{$p95$} & \textbf{$p99$} \\
\midrule
Per-graph total time (s) & 0.0079 & 21.0 & 69.7 & 456 \\
Per-pair total time (s)  & 0.020  & 50.3 & 139  & 834 \\
\bottomrule
\end{tabular}

\vspace{4pt}
\small
\begin{tabular}{lccccc}
\toprule
 & $\ge 0.1$s & $\ge 1$s & $\ge 10$s & $\ge 60$s & $\ge 600$s \\
\midrule
Graphs (fraction) & 42.5\% & 20.2\% & 14.0\% & 6.2\% & 1.0\% \\
Pairs (fraction)  & 46.1\% & 26.9\% & 17.1\% & 8.8\% & 1.6\% \\
\bottomrule
\end{tabular}
\end{table}

\paragraph{From Per-Pair Readout Time to Official BREC Wall-Clock.}
\label{app:runtime_official_cost}

The official BREC evaluation performs \emph{per-pair training} and uses many relabelings per graph. A simple lower-bound cost model for the readout stage alone is
\begin{equation}
T_{\text{official}}(\text{pair}) \;\approx\; E \cdot R \cdot t(G),
\label{eq:official_cost_model}
\end{equation}
where $t(G)$ is the measured per-pair readout time, $R$ is the number of relabelings per graph (typically $R=32$), and $E$ is the number of optimization epochs. Plugging in the measured per-pair distribution and taking a representative $E=50$ yields the illustrative readout-only wall-clock estimates in \cref{tab:runtime_brec_official_est} below. Note that this excludes runtimes for message passing, backpropagation, and optimizer overhead, so it is a lower bound.

\begin{table}[H]
\centering
\caption{Illustrative official BREC readout-only cost per pair (hours), using Eq.~\eqref{eq:official_cost_model} with $R=32$ relabelings and $E=50$ epochs.}
\label{tab:runtime_brec_official_est}
\small
\begin{tabular}{lcccc}
\toprule
 & \textbf{Median} & \textbf{$p90$} & \textbf{$p95$} & \textbf{$p99$} \\
\midrule
$T_{\text{official}}(\text{pair})$ (hours) & 0.0089 & 22.3 & 62.0 & 371 \\
\bottomrule
\end{tabular}
\end{table}

We see that even before accounting for encoder forward and backward passes, the heavy tail implies that a small fraction of symmetry-heavy pairs dominates wall-clock time.
This explains why full official coverage becomes prohibitive without additional engineering, such as caching projectors across relabelings, canonicalization strategies, or fallback policies on detected runtime explosions. It also motivates our focus on RPC-lite in the main paper.

RPC-lite and the official BREC benchmark test the same requirement---permutation-invariant but pair-sensitive representations---but they stress computations in very different ways.
RPC-lite trains a \emph{single} classifier per configuration (consisting of encoder, readout and seed) for one permutation of a dataset. So the readout is computed once per sampled graph and amortized over the whole run.
In contrast, the official BREC pipeline performs \emph{per-pair training} with many relabelings and many optimization steps. The readout is recomputed repeatedly for each relabeling. This, together with the computing time of several minutes of readouts for some highly symmetric graphs, makes full official evaluation of BREC impractical without further optimization. 
\subsection{Ablations (GIN).}
\label{app:ablations}
Here we present the results of our ablations studies for the training-free model and for BREC RPC-lite.

\subsubsection{Ablations for Training-Free Model}
\label{app:trainfree_ablations}

We ablate two core design choices of the readout using the same fixed training-free protocol. To avoid conflating statistical power with effect size, we use one seed per experiment and consider the graphs $G,G'$ separated if $\cos\bigl(z(G),z(G')\bigr) < 0.95$. The ablations are designed to isolate whether separation arises from 
\begin{enumerate}[noitemsep]
    \item the graph-dependent projector construction or
    \item the use of nonlinear block-wise statistics.
\end{enumerate}

\begin{table}[h]
\centering
\caption{Training-free ablation (GIN): projector family. Separation uses $\cos(z(G),z(G'))<0.95$.}
\label{tab:trainfree_projfamily}
\small
\begin{tabular}{lcc}
\toprule
\textbf{Readout} & \textbf{Separated} & \textbf{Total} \\
\midrule
isotypic          & 33 & 36 \\
isotypic\_linear  & 3  & 36 \\
sum               & 0  & 36 \\
\bottomrule
\end{tabular}
\end{table}

Table~\ref{tab:trainfree_projfamily} compares the isotypic readout to two baselines: a linearized variant that applies only linear statistics within each block, and standard sum pooling. The linearized readout collapses almost to the pooling baseline, validating the theory of linear permutation-invariant maps, which restricts such readouts to a fixed invariant subspace. The few separations observed are attributable to numerical or finite-sample effects rather than principled discrimination. In contrast, the full isotypic readout separates the vast majority of pairs.

\begin{table}[h]
\centering
\caption{Training-free ablation (GIN): effect of the block budget \texttt{max\_blocks}. Separation uses $\cos(z(G),z(G'))<0.95$.}
\label{tab:trainfree_blocksweep}
\small
\begin{tabular}{lcc}
\toprule
\textbf{max\_blocks} & \textbf{Separated} & \textbf{Total} \\
\midrule
1   & 9  & 36 \\
2   & 23 & 36 \\
4   & 28 & 36 \\
8   & 33 & 36 \\
16  & 33 & 36 \\
\bottomrule
\end{tabular}
\end{table}

Table~\ref{tab:trainfree_blocksweep} varies the block budget $\texttt{max\_blocks}$.  Separation improves monotonically as the number of retained blocks increases, and saturates around $B = 8$. This indicates that most useful invariant structure is captured by a relatively small number of isotypic components, suggesting favorable practical trade-offs. 

\subsubsection{BREC Ablations}   
To verify that the improvement in the RPC-lite experiment \cref{subsec:srg16} comes from the isotypic mechanism rather than from classifier training itself, we ablate the readout family under the same RPC-lite protocol.
Table~\ref{tab:brec_ablation_projfamily} shows that the linearized variant collapses to chance, mirroring the training-free behavior.
\cref{tab:brec_blocksweep} demonstrates that accuracy improves monotonically with \texttt{max\_blocks}, with most gains realized by 8 blocks. This is consistent with the readout extracting increasingly rich graph-dependent invariant structure as more blocks are retained, while multiset pooling remains at chance. It also suggests that a relatively small number of blocks suffices to obtain a significantly improved readout performance.

\begin{table}[H]
\centering
\caption{BREC RPC-lite ablation (GIN): projector family. Average accuracy on the 100-pair subset.}
\label{tab:brec_ablation_projfamily}
\small
\begin{tabular}{lc}
\toprule
\textbf{Readout} & \textbf{Avg test acc} \\
\midrule
isotypic          & 0.905 \\
isotypic\_linear  & 0.500 \\
mean              & 0.500 \\
\bottomrule
\end{tabular}
\end{table}

\begin{table}[H]
\centering
\caption{BREC RPC-lite ablation (GIN): effect of the block budget \texttt{max\_blocks}. Average test accuracy on the 100-pair subset.}
\label{tab:brec_blocksweep}
\small
\begin{tabular}{lcc}
\toprule
\textbf{max\_blocks} & \textbf{Isotypic readout} & \textbf{Mean pooling} \\
\midrule
1   & 0.745 & 0.500 \\
2   & 0.775 & 0.500 \\
4   & 0.815 & 0.500 \\
8   & 0.890 & 0.500 \\
16  & 0.905 & 0.500 \\
32  & 0.910 & 0.500 \\
\bottomrule
\end{tabular}
\end{table}


\end{document}